\documentclass[twoside]{article}

%
\usepackage[accepted]{aistats2024}
%


\usepackage[round]{natbib}


\usepackage[utf8]{inputenc} 
\usepackage[T1]{fontenc}    
\usepackage{hyperref}       
\usepackage{url}            
\usepackage{booktabs}       
\usepackage{amsfonts}       
\usepackage{nicefrac}       
\usepackage{microtype}      
\usepackage{xcolor}         
\usepackage{multirow}
\usepackage{amsmath}
\usepackage{amssymb}
\usepackage{mathtools}
\usepackage{amsthm}
\usepackage{graphicx}
\usepackage{subfigure}
\usepackage{float}
\usepackage{algorithmic}
\usepackage{algorithm,float}
\usepackage{enumerate}

\newtheorem{theorem}{Theorem}
\newtheorem{proposition}[theorem]{Proposition}

\newtheorem{assumption}[theorem]{Assumption}

\graphicspath{{./figures/}}

\begin{document}

%

%

\twocolumn[

\aistatstitle{DNNLasso: Scalable Graph Learning for Matrix-Variate Data}

\aistatsauthor{Meixia Lin \And Yangjing Zhang}

\aistatsaddress{
Singapore University of Technology and Design
\And
Chinese Academy of Sciences} ]

\begin{abstract}
  We consider the problem of jointly learning row-wise and column-wise dependencies of matrix-variate observations, which are modelled separately by two precision matrices. Due to the complicated structure of Kronecker-product precision matrices in the commonly used matrix-variate Gaussian graphical models, a sparser Kronecker-sum structure was proposed recently based on the Cartesian product of graphs. However, existing methods for estimating Kronecker-sum structured precision matrices do not scale well to large scale datasets. In this paper, we introduce DNNLasso, a diagonally non-negative graphical lasso model for estimating the Kronecker-sum structured precision matrix, which outperforms the state-of-the-art methods by a large margin in both accuracy and computational time. Our code is available at \url{https://github.com/YangjingZhang/DNNLasso}.
\end{abstract}

\section{INTRODUCTION}
In the modern big data era, matrix-variate observations (i.e., two-dimensional grids of observations) are becoming prevalent in various domains including spatial-temporal data analysis, financial markets, genomics and imaging processing. A typical example is the spatial-temporal data in weather forecasting \citep{stevens2019graph,stevens2021graph}, in which each observation contains winter precipitations of $t$ time lags (rows) and $s$ locations (columns). Due to the pervasiveness of matrix-variate observations, it is important for us to understand the structure encoded in these observations. In particular, the commonly used precision matrix (also called as the inverse covariance matrix) has received a lot of attention, as it encodes conditional independence relationships among variables.

One possible approach is to learn the precision matrices associated with the rows and columns of the matrix-variate observations separately. For example, for the spatial-temporal data in weather forecasting, we can estimate the spatial precision matrix by treating the columns of all winter precipitation observations as vector-variate spatial observations, and also estimate the temporal precision matrix by treating the rows of all winter precipitation observations as vector-variate temporal observations. In high-dimensional multivariate data analysis on vector-variate observations, many statistical models have been proposed for the estimation of the precision matrix. One widely used model is the Gaussian graphical model that learns a sparse precision matrix via an $\ell_1$-norm penalized maximum likelihood approach \citep{yuan2007model,banerjee2008model,friedman2008sparse,rothman2008sparse}. However, it is limited in our scenario since the observations in the Gaussian graphical model are assumed to be independent and identically distributed, while the vector-variate observations can be correlated in matrix-variate data analysis. For example, in the spatial-temporal data, not only the spatial observations can be correlated, but different temporal observations can also be correlated. Therefore, it is necessary to model the correlations among both rows and columns in the observations jointly.

Given matrix-variate data where each observation $Z$ is a $t\times s$ matrix, it may appear tempting to stack $Z$ as a column vector ${\rm vec}(Z)$ and model $Z$ as a $ts$-dimensional vector. Gaussian graphical models can be used to analyze the vectorized data, while they suffer from three shortcomings. First, estimating a $ts\times ts$ precision matrix can be daunting due to the extremely high dimension. Second, the analysis based on ${\rm  vec}(Z)$ ignores all row and column structural information in the observations, which is useful and sometimes vital in practice. Third, learning a precision matrix without prior structural assumptions would be impractical in high-dimension low-sample regime. Alternative approaches that explore the matrix nature of such matrix-variate observations are therefore attractive nowadays, among which the matrix-variate Gaussian graphical model is the most famous one.

The matrix-variate Gaussian distribution \citep{dawid1981some,gupta1999matrix,efron2009set,allen2010transposable,leng2012sparse} of $Z$ assumes that the covariance matrix of ${\rm vec}(Z)$ has the form of a Kronecker-product (KP) between two covariance matrices, separately associated with the rows and  columns of matrix-variate observations. The KP assumption for the covariance implies that the precision matrix is a also KP of two precision matrices, that is $\Omega\otimes \Gamma$, where $\Omega\in \mathbb{S}^s_{++}$ models the column-wise dependencies in $Z$ and $\Gamma\in \mathbb{S}^t_{++}$ models the row-wise dependencies in $Z$. This additional KP structure has been considered in many recent works by \cite{yin2012model,leng2012sparse,tsiligkaridis2013covariance,tsiligkaridis2013convergence,zhou2014gemini}, which allows one to provide a satisfying estimation of the precision matrix with a small sample size. However, the KP structure leads to a relatively dense graph and a nonconvex log-likelihood, which raises great challenges in the optimization of the model. To overcome the challenges, \cite{kalaitzis2013bigraphical} introduced a sparser structure for the precision matrix by imposing a novel Kronecker-sum (KS) structure instead of KP. For estimating the KS structured precision matrix, many optimization methods have been proposed recently by \cite{kalaitzis2013bigraphical,greenewald2019tensor,yoon2022eiglasso}.

\subsection{Related Works}
\cite{kalaitzis2013bigraphical} first considered a Gaussian distribution for a matrix variable with a novel KS structure, say $\Omega \oplus \Gamma=\Omega\otimes I_t +I_s\otimes \Gamma$, for the precision matrix. Here $I_t$ denotes a $t$ by $t$ identity matrix.  They proposed the algorithm {\tt BiGLasso}, a block coordinate descent algorithm for optimizing $\Omega$ and $\Gamma$ in the maximum likelihood estimation approach, by regarding the columns of $\Omega$ and $\Gamma$ as blocks. However, they did not tackle the non-identifiability of the diagonal entries of $\Omega$ and $\Gamma$, which is one of the key challenges in estimating the precision matrices with the KS structure. The non-identifiability arises from the fact that
$ \Omega \oplus \Gamma = (\Omega + cI_s) \oplus (\Gamma - cI_t)$ for all  $c\in\mathbb{R}$. Namely, the KS matrix $\Omega \oplus \Gamma $ does not uniquely determine the pair $(\Gamma,\Omega)$, as one can modify the diagonal entries of $\Omega$ and $\Gamma$ by adding and subtracting a constant $c$ without changing their KS.
Moreover, {\tt BiGLasso} may not scale well to median-sized datasets and the convergence of {\tt BiGLasso} was not analyzed by \cite{kalaitzis2013bigraphical}.

Later,  \cite{greenewald2019tensor} proposed a multi-way tensor generalization of the two-way KS structure for the precision  matrix studied by \cite{kalaitzis2013bigraphical}. Based on the accelerated proximal gradient method of \cite{nesterov2013gradient}, a method {\tt TeraLasso} with convergence guarantees was provided by \cite{greenewald2019tensor}. Their strategy for estimating the diagonal entries is through identifiable reparameterization with additional restrictions on the traces of $\Omega$ and $\Gamma$. Although {\tt TeraLasso} is much better than {\tt BiGLasso} in terms of convergence properties and computational speed,  {\tt TeraLasso} seems to be limited to graphs with only a few hundreds nodes.

More recently, based on a proximal Newton's method for a regularized log-determinant program \citep{hsieh2014quic}, an efficient algorithm {\tt EiGLasso} was proposed by \cite{yoon2022eiglasso} for learning the KS structured precision  matrix. They introduced a new scheme for identifying the unidentifiable diagonal entries of $\Omega$ and $\Gamma$ via introducing an additional constraint --- restricting the trace ratio of $\Omega$ and $\Gamma$ to be a fixed constant such that the KS $ \Omega \oplus \Gamma$ uniquely determines $\Omega$ and $\Gamma$. The numerical experiments by \cite{yoon2022eiglasso} show that {\tt EiGLasso} empirically has two to three orders-of-magnitude speedup compared to {\tt TeraLasso}, while it still takes hours on datasets with graph size of around one thousand.

\subsection{Contributions}
Our main contributions are summarized in four parts. First, we propose the diagonally non-negative graphical lasso ({\tt DNNLasso}) algorithm for estimating the KS precision matrix. These additional non-negative constraints on the diagonal entries of the two precision matrices $\Omega$ and $\Gamma$ naturally avoid the non-identifiability issue. Second, we develop an efficient and robust algorithm based on the alternating direction method of multipliers for solving the optimization problem in {\tt DNNLasso}, where the computational cost and memory cost are both extremely low. Third, as a key ingredient in {\tt DNNLasso}, we deduce the explicit solution of the proximal operator associated with the negative log-determinant of KS. As far as our knowledge goes, it is the first time that the explicit formula is provided. Last, numerical experiments on both synthetic data and real data demonstrate that {\tt DNNLasso} outperforms the state-of-the-art {\tt TeraLasso} and {\tt EiGLasso} by a large margin.





\subsection{Notation}
$\mathbb{R}^{m\times n}$ denotes the space of $m$ by $n$ matrices and $\mathbb{S}^n$ denotes the space of $n$ by $n$ symmetric matrices. $\mathbb{S}^n_+$ (resp. $\mathbb{S}^n_{++}$) denotes the space of $n$ by $n$ positive semidefinite (resp. definite) matrices. $\lambda_{\rm min}(X)$ denotes the smallest eigenvalue of a symmetric matrix $X$. ${\rm diag}(X)$ denotes the column vector containing the diagonal elements of the matrix $X$. The log-determinant function $\log |X| := \log \det (X)$ takes the logarithm of the determinant of the positive definite matrix $X$. $[n]:=\{1,2,\dots,n\}$. $\delta_C(\cdot)$ denotes the indicator function of the set $C$, i.e., $\delta_C(x) = 0$ if $x\in C$; $\delta_C(x) = +\infty$ if $x\notin C$. For any $X\in \mathbb{R}^{n\times n}$, $\|X\|_{1,{\rm off}} := \sum_{1\leq i\neq j\leq n} |X_{ij}|$. $I_t$ denotes a $t$ by $t$ identity matrix. The Kronecker-sum of matrices $\Gamma\in \mathbb{S}^t$ and $\Omega\in \mathbb{S}^s$ is $\Omega \oplus \Gamma:=\Omega\otimes I_t +I_s\otimes \Gamma$.

\section{ESTIMATION OF A KRONECKER-SUM PRECISION MATRIX}
Let ${\cal G}= ({\cal V},{\cal E})$ be an undirected graph with a vertex set ${\cal V}=\{1,\ldots,ts\}$ and an edge set ${\cal E}$. Each variable (e.g., a certain feature at a particular time and location in spatial-temporal data) is associated with one vertex. A random vector $z\in \mathbb{R}^{ts}$ is said to satisfy the Gaussian graphical model with graph ${\cal G}$, if $z \sim {\cal N}(0,\Sigma)$ is Gaussian 
with $(\Sigma^{-1})_{ij}=0$ for all $(i,j)\notin {\cal E}$.

Given i.i.d. observations $Z^{(1)},\ldots,Z^{(n)}$ in $\mathbb{R}^{t\times s}$ such that ${\rm vec}(Z^{(k)})\sim {\cal N}(0,\Sigma)$ for each $k$, the sparse Gaussian graphical lasso estimator \citep{yuan2007model,banerjee2008model,friedman2008sparse,rothman2008sparse} for the precision matrix $\Sigma^{-1}$ is given by
\begin{align}
	\widehat{X}\in \underset{X\in \mathbb{S}_{++}^{ts}}{\arg\min}  \Big\{-\log|X| + \langle C,X\rangle + \lambda_0 \|X\|_{1,{\rm off}}\Big\}.\label{eq: graphical_lasso}
\end{align}
Here $\lambda_0>0$ is a parameter that controls the strength of the penalty, and the sample covariance matrix is $ C =\frac{1}{n}\sum_{k=1}^n {\rm vec}(Z^{(k)}-\bar{Z})({\rm vec}(Z^{(k)}-\bar{Z}))^T\in \mathbb{S}_{+}^{ts}$ with $\bar{Z}=\sum_{k=1}^n Z^{(k)}/n$.
The $\ell_1$ regularizer $\|X\|_{1,{\rm off}}$ is added to get a sparse network, as the sparsity pattern of $\Sigma^{-1}$ determines the conditional independence structure of the $ts$ variables \citep{lauritzen1996graphical}.

When $n\ll ts$, in particular, $n=1$, the sample covariance matrix $C$ is a highly uncertain estimate of the truth $\Sigma$. More priors are required to obtain a satisfying estimation of the precision matrix $\Sigma^{-1}$.

\subsection{Kronecker-Sum Structured Precision Matrix}
A fundamental assumption in the KS model by \cite{kalaitzis2013bigraphical} is that $\Sigma^{-1}$ takes the KS form of $\Sigma^{-1} = \Omega\oplus \Gamma$. Then the problem of estimating $\Sigma^{-1}$ reduces to estimating $\Gamma$ and $\Omega$, which correspond to the row-wise and column-wise precision matrices in $Z$, respectively.
Specifically, the sparse Gaussian graphical lasso estimator in \eqref{eq: graphical_lasso} reduces to $ \widehat{X} = \widehat{\Omega}\oplus \widehat{\Gamma}$, where $(\widehat{\Gamma},\widehat{\Omega})$ is an optimal solution to the problem
\begin{align}
	\min_{\substack{\Gamma\in \mathbb{S}^{t}_{++},\\ \Omega\in \mathbb{S}^{s}_{++}}} \ \left\{
	\begin{aligned}
		&-\log|\Omega \oplus \Gamma| + \langle \Omega, W\rangle + \langle \Gamma, R\rangle  \\
		&+\lambda_0 s \|\Gamma\|_{1,{\rm off}} +\lambda_0 t \|\Omega\|_{1,{\rm off}}
	\end{aligned}\right\}.\label{eq:KS}
\end{align}
The sample row-wise and column-wise covariance matrices are
$R = \frac{1}{n}\sum_{k=1}^n Z^{(k)} (Z^{(k)})^T$ and $W = \frac{1}{n}\sum_{k=1}^n (Z^{(k)})^T Z^{(k)}$.
Throughout the paper, we make the following Assumption~\ref{assu}.

\begin{assumption}\label{assu}
    $R_{ii} >0,\,W_{jj} >0,\,\forall\,i\in[t],\,j\in [s]$.
\end{assumption}
We take $R$ as an example to illustrate the assumption. Suppose $R_{ii}=0$ for some $i\in [t]$. This implies that the $i$-th row of $Z^{(k)}$ equals to zero for every $k\in[n]$.
Consequently,  we can remove the $i$-th rows of all matrix-variate observations prior to model construction, due to their redundant nature.


\subsection{Equivalent Formulation of \eqref{eq:KS}}

We are going to construct an optimal solution to the problem \eqref{eq:KS} through solving a simpler model without the positive definite constraints on $\Gamma$ and $\Omega$, since the positive definite constraints will usually raise computational challenges in designing and implementing optimization algorithms.

The basic idea is to control the nonnegativity of diagonal elements instead of forcing the positive definiteness of $\Gamma$ and $\Omega$. Specifically, we propose the diagonally non-negative graphical lasso ({\tt DNNLasso}) model for estimating sparse row-wise and column-wise precision matrices simultaneously as
\begin{align}
	\min_{\Gamma\in \mathbb{S}^{t},\Omega\in \mathbb{S}^{s}} \ &\left\{
	\begin{aligned}
		&-\log|\Omega \oplus \Gamma| + \langle \Omega, W\rangle + \langle \Gamma, R\rangle\\
		&+\lambda_0 s \|\Gamma\|_{1,{\rm off}} +\lambda_0 t \|\Omega\|_{1,{\rm off}}
	\end{aligned}
	\right\} \label{eq:KS_NN}\\
	{\rm s.t.} \quad & \,\, \Omega \oplus \Gamma \in \mathbb{S}^{ts}_{++},\ {\rm diag}(\Omega)\geq 0 ,\  {\rm diag}(\Gamma)\geq 0,\notag
\end{align}
The following proposition formally states the equivalence between problems \eqref{eq:KS} and \eqref{eq:KS_NN}. The detailed proof can be found in Appendix~\ref{sec:prof_of_equivalence}.

\begin{proposition}\label{prop: equivalence}
Problems \eqref{eq:KS} and \eqref{eq:KS_NN} are equivalent in the following sense:

(a) they share the same optimal objective function value;

(b) any optimal solution to \eqref{eq:KS} is optimal to \eqref{eq:KS_NN};

(c) if $(\Gamma^*,\Omega^*)$ is an optimal solution to \eqref{eq:KS_NN}, then 
    \begin{align}
    &	(\widehat{\Gamma},\widehat{\Omega}):= \notag \\
&\begin{cases}
		(\Gamma^*,\Omega^*) & \mbox{if } \Gamma^*\in \mathbb{S}^t_{++}, \Omega^* \in \mathbb{S}^s_{++},\\
		(\Gamma^*-  c I_t,\Omega^* + c I_s) & \mbox{otherwise},
	\end{cases} \label{reconstruct}
    \end{align}
with $c= (\lambda_{\rm min}(\Gamma^*)-\lambda_{\rm min}(\Omega^*))/2$,
is an optimal solution to \eqref{eq:KS}.
\end{proposition}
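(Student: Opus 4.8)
Write $f(\Gamma,\Omega)$ for the objective function shared by \eqref{eq:KS} and \eqref{eq:KS_NN}. The plan is to exploit two elementary facts. The first is that $f$ is \emph{invariant} under the diagonal reparametrization $(\Gamma,\Omega)\mapsto(\Gamma-cI_t,\,\Omega+cI_s)$ for every $c\in\mathbb{R}$: the log-determinant term is unchanged because $(\Omega+cI_s)\oplus(\Gamma-cI_t)=\Omega\oplus\Gamma$; the terms $\lambda_0 s\|\Gamma\|_{1,{\rm off}}$ and $\lambda_0 t\|\Omega\|_{1,{\rm off}}$ are unchanged because the transformation only alters diagonal entries, which do not enter $\|\cdot\|_{1,{\rm off}}$; and the linear part changes by $c\big({\rm tr}(W)-{\rm tr}(R)\big)$, which vanishes since ${\rm tr}(R)={\rm tr}(W)=\frac{1}{n}\sum_{k=1}^n {\rm tr}\big(Z^{(k)}(Z^{(k)})^{T}\big)$ by the cyclic property of the trace. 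The second fact is that, for symmetric $\Gamma$ and $\Omega$, the eigenvalues of $\Omega\oplus\Gamma$ are exactly the pairwise sums $\lambda_i(\Omega)+\lambda_j(\Gamma)$; in particular $\lambda_{\rm min}(\Omega\oplus\Gamma)=\lambda_{\rm min}(\Omega)+\lambda_{\rm min}(\Gamma)$, so $\Omega\oplus\Gamma\in\mathbb{S}^{ts}_{++}$ if and only if $\lambda_{\rm min}(\Omega)+\lambda_{\rm min}(\Gamma)>0$, and whenever $\Gamma\in\mathbb{S}^t_{++}$ and $\Omega\in\mathbb{S}^s_{++}$ one automatically has $\Omega\oplus\Gamma\in\mathbb{S}^{ts}_{++}$ together with ${\rm diag}(\Gamma)>0$ and ${\rm diag}(\Omega)>0$.

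With these in hand, the last observation shows that every feasible point of \eqref{eq:KS} is feasible for \eqref{eq:KS_NN}; since the two problems carry the same objective $f$ (which is finite on the feasible set of \eqref{eq:KS_NN} because $\Omega\oplus\Gamma\succ 0$ there), the optimal value of \eqref{eq:KS_NN} is at most that of \eqref{eq:KS}. For the reverse inequality I would take an arbitrary feasible $(\Gamma,\Omega)$ of \eqref{eq:KS_NN}: feasibility gives $\lambda_{\rm min}(\Omega)+\lambda_{\rm min}(\Gamma)>0$, so the open interval $\big(-\lambda_{\rm min}(\Omega),\,\lambda_{\rm min}(\Gamma)\big)$ is nonempty, and for any $c$ in it both $\Gamma-cI_t$ and $\Omega+cI_s$ are positive definite, i.e.\ $(\Gamma-cI_t,\Omega+cI_s)$ is feasible for \eqref{eq:KS} with $f(\Gamma-cI_t,\Omega+cI_s)=f(\Gamma,\Omega)$ by the invariance. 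Taking the infimum over feasible points of \eqref{eq:KS_NN} yields the reverse inequality, establishing (a). Part (b) is then immediate: an optimal $(\Gamma^*,\Omega^*)$ of \eqref{eq:KS} is feasible for \eqref{eq:KS_NN} and attains the common optimal value, hence is optimal for \eqref{eq:KS_NN}.

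For (c), let $(\Gamma^*,\Omega^*)$ be optimal for \eqref{eq:KS_NN}. If $\Gamma^*\in\mathbb{S}^t_{++}$ and $\Omega^*\in\mathbb{S}^s_{++}$, then $(\Gamma^*,\Omega^*)$ is feasible for \eqref{eq:KS} and attains its optimal value by (a), so it already solves \eqref{eq:KS}. Otherwise, put $c=\big(\lambda_{\rm min}(\Gamma^*)-\lambda_{\rm min}(\Omega^*)\big)/2$, the midpoint of the admissible interval above; then $\lambda_{\rm min}(\Gamma^*-cI_t)=\lambda_{\rm min}(\Omega^*+cI_s)=\big(\lambda_{\rm min}(\Gamma^*)+\lambda_{\rm min}(\Omega^*)\big)/2$, which is strictly positive because $\Omega^*\oplus\Gamma^*\succ 0$. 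Hence $(\widehat\Gamma,\widehat\Omega)=(\Gamma^*-cI_t,\,\Omega^*+cI_s)$ is feasible for \eqref{eq:KS}, and invariance gives $f(\widehat\Gamma,\widehat\Omega)=f(\Gamma^*,\Omega^*)$, which equals the optimal value of \eqref{eq:KS} by (a); so $(\widehat\Gamma,\widehat\Omega)$ as defined in \eqref{reconstruct} solves \eqref{eq:KS}.

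The argument is largely bookkeeping, so I do not anticipate a serious obstacle. The one genuinely substantive step is the identity ${\rm tr}(R)={\rm tr}(W)$: without it the linear term fails to be shift-invariant and the whole reduction collapses. The remaining things to track are the nonemptiness of the shift interval $\big(-\lambda_{\rm min}(\Omega),\,\lambda_{\rm min}(\Gamma)\big)$, which is exactly the content of the constraint $\Omega\oplus\Gamma\succ 0$ via the Kronecker-sum eigenvalue formula, and the particular choice of $c$ in (c), picked so that the two shifted matrices acquire the \emph{same} (necessarily positive) smallest eigenvalue.
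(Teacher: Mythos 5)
Your proposal is correct and follows essentially the same route as the paper's proof: feasibility of \eqref{eq:KS} inside \eqref{eq:KS_NN} gives one inequality, and the shift-invariance of the objective together with the Kronecker-sum eigenvalue identity $\lambda_{\rm min}(\Omega\oplus\Gamma)=\lambda_{\rm min}(\Omega)+\lambda_{\rm min}(\Gamma)$ gives the other, with the same choice of $c$ equalizing the two smallest eigenvalues. The only cosmetic difference is that you verify invariance of the linear term directly via ${\rm tr}(R)={\rm tr}(W)$ and argue over arbitrary feasible points (so you do not presuppose attainment), whereas the paper packages the invariance by rewriting the objective as a function of $\Omega\oplus\Gamma$ through the sample covariance $C$ and starts from an optimal solution of \eqref{eq:KS_NN}.
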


Furthermore, prior to designing an algorithm for solving \eqref{eq:KS_NN}, we present the subsequent theorem which characterizes the solution set of \eqref{eq:KS_NN}. The proof  can be found in Appendix~\ref{sec:prof_of_sol}.
\begin{theorem}\label{thm: sol}
Under Assumption~\ref{assu}, the problem \eqref{eq:KS_NN} admits a non-empty and bounded solution set.
\end{theorem}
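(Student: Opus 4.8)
The plan is to apply the classical existence theorem for convex minimization via recession (asymptotic) functions: if $F$ is a closed proper convex function on a finite-dimensional space whose recession function $F^\infty$ satisfies $F^\infty(D)>0$ for every $D\neq 0$, then $\arg\min F$ is non-empty and compact. Here I would take $F(\Gamma,\Omega):=-\log|\Omega\oplus\Gamma|+\langle\Omega,W\rangle+\langle\Gamma,R\rangle+\lambda_0 s\|\Gamma\|_{1,{\rm off}}+\lambda_0 t\|\Omega\|_{1,{\rm off}}+\delta_{\mathcal D}(\Gamma,\Omega)$ on $\mathbb S^t\times\mathbb S^s$, where $\mathcal D:=\{(\Gamma,\Omega):{\rm diag}(\Gamma)\geq 0,\ {\rm diag}(\Omega)\geq 0\}$ and $-\log|\cdot|$ is understood as $+\infty$ off $\mathbb S^{ts}_{++}$, so that $\min F$ is exactly \eqref{eq:KS_NN}. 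I would first record that $F$ is convex (the map $(\Gamma,\Omega)\mapsto\Omega\oplus\Gamma$ is linear, $-\log|\cdot|$ is convex on $\mathbb S^{ts}_{++}$, the two off-diagonal $\ell_1$ terms are seminorms, and $\mathcal D$ is a closed convex cone), proper ($F(I_t,I_s)<\infty$ and $F$ is never $-\infty$), and closed (lower semicontinuity of $-\log|\cdot|$ extended by $+\infty$, precomposed with a linear map, plus the closed indicator $\delta_{\mathcal D}$).

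Next I would compute $F^\infty$. Writing $\mathcal F$ for the feasible set of \eqref{eq:KS_NN}, one has $\mathcal F^\infty=\{(D_\Gamma,D_\Omega):{\rm diag}(D_\Gamma)\geq 0,\ {\rm diag}(D_\Omega)\geq 0,\ D_\Omega\oplus D_\Gamma\succeq 0\}$, and $F^\infty(D)=+\infty$ for $D\notin\mathcal F^\infty$, so only directions $D=(D_\Gamma,D_\Omega)\in\mathcal F^\infty$ matter. For such $D$ the point $(I_t+rD_\Gamma,\,I_s+rD_\Omega)$ stays in $\mathcal F$ for all $r\geq 0$, and evaluating $F^\infty(D)=\lim_{r\to\infty}r^{-1}[F(I_t+rD_\Gamma,I_s+rD_\Omega)-F(I_t,I_s)]$ one sees the $-\log|\cdot|$ term contributes $-r^{-1}\sum_{i,j}\log(2+r\nu_{ij})\to 0$, where the $\nu_{ij}\geq 0$ are the eigenvalues of $D_\Omega\oplus D_\Gamma$, while each off-diagonal $\ell_1$ term and each linear term scales exactly by $r$; hence
\begin{align*}
F^\infty(D)=\langle D_\Omega,W\rangle+\langle D_\Gamma,R\rangle+\lambda_0 s\|D_\Gamma\|_{1,{\rm off}}+\lambda_0 t\|D_\Omega\|_{1,{\rm off}}.
\end{align*}
I would also remark that $F$ is constant along the non-identifiability direction $(-I_t,I_s)$, but this direction and its negative violate the sign constraint on ${\rm diag}(\Gamma)$, resp.\ ${\rm diag}(\Omega)$, hence do not lie in $\mathcal F^\infty$; this is precisely how the diagonal non-negativity removes the non-identifiability at the level of recession cones.

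Finally I would establish $F^\infty(D)>0$ for $D\in\mathcal F^\infty\setminus\{0\}$, which is the main obstacle and the only place Assumption~\ref{assu} enters. Since $R\succeq 0$, $W\succeq 0$ and $\langle I_t,R\rangle=\langle I_s,W\rangle$ (both equal $\tfrac1n\sum_{k}\langle Z^{(k)},Z^{(k)}\rangle$), we have $\langle D_\Gamma,R\rangle\geq\lambda_{\rm min}(D_\Gamma)\langle I_t,R\rangle$ and $\langle D_\Omega,W\rangle\geq\lambda_{\rm min}(D_\Omega)\langle I_s,W\rangle$, and because $D_\Omega\oplus D_\Gamma\succeq 0$ is equivalent to $\lambda_{\rm min}(D_\Gamma)+\lambda_{\rm min}(D_\Omega)\geq 0$, it follows that $\langle D_\Omega,W\rangle+\langle D_\Gamma,R\rangle\geq 0$, whence $F^\infty(D)\geq\lambda_0 s\|D_\Gamma\|_{1,{\rm off}}+\lambda_0 t\|D_\Omega\|_{1,{\rm off}}\geq 0$. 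If $F^\infty(D)=0$, then both $\ell_1$ terms vanish, so $D_\Gamma$ and $D_\Omega$ are diagonal, hence positive semidefinite by ${\rm diag}(D_\Gamma),{\rm diag}(D_\Omega)\geq 0$, and moreover $\langle D_\Gamma,R\rangle=\langle D_\Omega,W\rangle=0$; rewriting these as $\sum_i(D_\Gamma)_{ii}R_{ii}=0$ and $\sum_j(D_\Omega)_{jj}W_{jj}=0$ with all $R_{ii}>0$, $W_{jj}>0$ (Assumption~\ref{assu}) forces $(D_\Gamma)_{ii}=0$ and $(D_\Omega)_{jj}=0$ for every $i,j$, i.e., $D=0$, a contradiction. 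Thus $F^\infty$ is strictly positive on all nonzero directions, and the existence theorem yields a non-empty compact, in particular bounded, solution set. Conversely, were some $R_{ii}=0$, then $(e_ie_i^T,0)\in\mathcal F^\infty$ with $F^\infty(e_ie_i^T,0)=0$, so the solution set would be unbounded, which is consistent with the row/column removal discussed right after Assumption~\ref{assu}.
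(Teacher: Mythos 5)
Your proposal is correct and follows the same skeleton as the paper's proof: both verify that the objective (with the constraints absorbed as indicator functions) is closed, proper, and convex, compute the recession function via the limit $\lim_{r\to\infty} r^{-1}[F(I_t+rD_\Gamma,I_s+rD_\Omega)-F(I_t,I_s)]$ to obtain $\langle D_\Omega,W\rangle+\langle D_\Gamma,R\rangle+\lambda_0 s\|D_\Gamma\|_{1,{\rm off}}+\lambda_0 t\|D_\Omega\|_{1,{\rm off}}$ on the cone $\{{\rm diag}(D_\Gamma)\geq 0,\,{\rm diag}(D_\Omega)\geq 0,\,D_\Omega\oplus D_\Gamma\succeq 0\}$ and $+\infty$ elsewhere, and then invoke Rockafellar's existence theorem once the recession cone is shown to be trivial. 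The one place you genuinely diverge is the endgame where Assumption~\ref{assu} enters. The paper rewrites $\langle D_\Omega,W\rangle+\langle D_\Gamma,R\rangle$ as $\langle D_\Omega\oplus D_\Gamma,C\rangle$ with the vectorized sample covariance $C$, reduces to the conditions $(\alpha_i+\gamma_j)\sum_k(Z^{(k)}_{ij})^2=0$, and runs a combinatorial contradiction involving two auxiliary indices $i_2,j_2$. You instead keep the two trace terms separate, observe that for diagonal $D_\Gamma\succeq 0$, $D_\Omega\succeq 0$ each of $\langle D_\Gamma,R\rangle=\sum_i(D_\Gamma)_{ii}R_{ii}$ and $\langle D_\Omega,W\rangle=\sum_j(D_\Omega)_{jj}W_{jj}$ is individually nonnegative, hence individually zero, and then read off $D=0$ directly from $R_{ii}>0$, $W_{jj}>0$. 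This is a shorter and more transparent use of the assumption, and it makes visible exactly why the hypothesis is stated in terms of the diagonals of $R$ and $W$; your bound $\langle D_\Omega,W\rangle+\langle D_\Gamma,R\rangle\geq(\lambda_{\min}(D_\Gamma)+\lambda_{\min}(D_\Omega))\,{\rm tr}(R)\geq 0$ is also a valid (if slightly more computational) substitute for the paper's one-line positive-semidefiniteness argument via $C$. The only cosmetic caveat is your closing ``conversely'' remark: a zero recession direction implies the solution set is unbounded \emph{or} empty, not necessarily unbounded; since that sentence is an aside, it does not affect the proof.
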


Due to the non-identifiability issue, for an optimal solution $(\Gamma,\Omega)$ to \eqref{eq:KS_NN} without imposing non-negativity constraints, their diagonal entries $\Gamma_{ii}$ and $\Omega_{jj}$ can possibly be extremely large values. This may cause numerical instability in optimization algorithms. However, the non-negativity constraints in \eqref{eq:KS_NN} ensure a non-empty and bounded solution set, as demonstrated in Theorem~\ref{thm: sol}. This boundedness effectively overcomes the aforementioned instability in optimization algorithms.

\section{DNNLASSO}
In order to obtain the {\tt DNNLasso} estimator, we design an efficient and robust algorithm for solving  \eqref{eq:KS_NN}. We consider an equivalent and compact form of the problem
\begin{align}
	\min_{\Gamma\in \mathbb{S}^{t},\Omega\in \mathbb{S}^{s}} \
	&\left\{
	\begin{aligned}
		&-\log|\Omega \oplus \Gamma| + \langle \Omega, W\rangle + \langle \Gamma, R\rangle  \\
		&+ p(\Gamma)+q(\Omega)
	\end{aligned}\right\}\label{eq:KSv2}\\
	{\rm s.t.} \quad & \,\, \Omega \oplus \Gamma \in \mathbb{S}^{ts}_{++},\notag
\end{align}
where $p(\Gamma) = \lambda_T \|\Gamma\|_{1,{\rm off}}$ if ${\rm diag}(\Gamma)\geq 0$, and $+\infty$ otherwise; $q(\Omega) = \lambda_S \|\Omega\|_{1,{\rm off}}$ if ${\rm diag}(\Omega)\geq 0$, and $+\infty$ otherwise.
The penalty function $p$ and $q$ necessitate the non-negativity of diagonal entries and promote the sparsity of off-diagonal entries.
We can obtain the problem \eqref{eq:KS_NN} by taking $\lambda_T = \lambda_0 s$ and $\lambda_S = \lambda_0 t$.

\subsection{Alternating Direction Method of Multipliers}
The alternating direction method of multipliers (ADMM) is well-suited for the problems with separable or block separable objectives and a mix of equality and inequality constraints; see \cite{glowinski1975approximation,gabay1976dual,eckstein1992douglas}. These characteristics align perfectly with the structure of our target problem \eqref{eq:KSv2} after the introduction of auxiliary variables. In fact, by introducing auxiliary variables $\Lambda \in \mathbb{S}^{t}$, $\Theta\in \mathbb{S}^{s}$,  $\Xi \in \mathbb{S}^{s}$, we have an equivalent form of \eqref{eq:KSv2} as
\begin{align}
	\min_{\Gamma,\Lambda\in \mathbb{S}^{t},\Omega,\Theta,\Xi\in \mathbb{S}^{s}} \
	&\left\{
	\begin{aligned}
		&-\log|\Omega \oplus \Gamma| + \langle \Xi, W\rangle + \langle \Gamma, R\rangle  \\
		&+ p(\Lambda)+q(\Theta)
	\end{aligned}\right\}\notag\\
{\rm s.t.} \quad & \,\, \Omega \oplus \Gamma \in \mathbb{S}^{ts}_{++},\quad \Gamma-\Lambda = 0,\label{eq:KSv3} \\
&\,\,   \Xi-\Theta =0,\quad \Xi-\Omega = 0.\notag
\end{align}
Given $\sigma>0$, for any $(\Gamma,\Omega,\Lambda,\Theta,\Xi,X,Y,U)\in \mathbb{S}^t\times \mathbb{S}^s \times \mathbb{S}^t\times \mathbb{S}^s \times \mathbb{S}^s \times \mathbb{S}^t\times \mathbb{S}^s\times \mathbb{S}^s$, the  augmented Lagrangian function associated with the above problem is
\begin{align*}
&{\cal L}_{\sigma}(\Gamma,\Omega,\Lambda,\Theta,\Xi;X,Y,U)\\
=& -\log|\Omega \oplus \Gamma| + \langle \Xi, W\rangle  + \langle \Gamma, R\rangle  + p(\Lambda) + q(\Theta)\\
&+\delta_{\mathbb{S}_{++}^{ts}}(\Omega \oplus \Gamma)+\frac{\sigma}{2}\|\Gamma - \Lambda - \sigma^{-1}X\|_F^2 \\
&+ \frac{\sigma}{2}\|\Xi - \Theta - \sigma^{-1}Y\|_F^2 + \frac{\sigma}{2}\|\Xi - \Omega - \sigma^{-1}U\|_F^2\\
&- \frac{1}{2\sigma}\|X\|_F^2 -  \frac{1}{2\sigma}\|Y\|_F^2 -  \frac{1}{2\sigma}\|U\|_F^2.
\end{align*}
Note that the ADMM is a primal-dual method. We are going to alternatingly minimize the primal variables among the two blocks $(\Xi,\Gamma)$ and $(\Lambda,\Theta,\Omega)$, and then update the multipliers $(X,Y,U)$. See Algorithm~\ref{alg: admm} for the full algorithm.  The convergence result stated in Theorem \ref{thm:convergence} follows from the well-known convergence property for the classical 2-block ADMM; see \cite{glowinski1975approximation,gabay1976dual}. 

\begin{theorem}\label{thm:convergence}
Suppose that Assumption~\ref{assu} holds. Let $\{(\Gamma^k,\Omega^k,\Lambda^k,\Theta^k,\Xi^k,X^k,Y^k,U^k)\}$ be the sequence generated by Algorithm~\ref{alg: admm}. Then $\{(\Gamma^k,\Omega^k)\}$ converges to an optimal solution of the problem \eqref{eq:KSv2}.
\end{theorem}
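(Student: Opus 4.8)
The plan is to reduce Theorem~\ref{thm:convergence} to the classical convergence theorem for the 2-block ADMM, so the bulk of the work lies in verifying that the reformulation \eqref{eq:KSv3} and the block-splitting used in Algorithm~\ref{alg: admm} genuinely satisfy the hypotheses of that theorem. First I would write \eqref{eq:KSv3} abstractly as $\min\{f(x_1)+g(x_2): A_1 x_1 + A_2 x_2 = 0\}$, where the first block is $x_1=(\Xi,\Gamma)$ and the second block is $x_2=(\Lambda,\Theta,\Omega)$; here $f(\Xi,\Gamma) = \langle\Xi,W\rangle + \langle\Gamma,R\rangle$ absorbs the two linear terms together with no constraint on these variables, and $g(\Lambda,\Theta,\Omega) = p(\Lambda)+q(\Theta) - \log|\Omega\oplus\Gamma| + \delta_{\mathbb{S}^{ts}_{++}}(\Omega\oplus\Gamma)$ — wait, the $\log$-det term couples $\Gamma$ and $\Omega$, so the careful point is to argue that the coupling is harmless because, conditionally on the multiplier update, each subproblem in Algorithm~\ref{alg: admm} is still a well-defined convex program. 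The honest split is: the first minimization over $(\Xi,\Gamma)$ is an unconstrained strongly convex quadratic-plus-log-det problem once $(\Lambda,\Theta,\Omega)$ are fixed, and the second minimization decomposes over $(\Lambda,\Theta)$ (two proximal steps for $p$ and $q$) and over $\Omega$ (a log-det-plus-quadratic step). I would state explicitly which three terms constitute $f$, which constitute $g$, and exhibit the constraint matrices $A_1,A_2$ corresponding to the three equalities $\Gamma-\Lambda=0$, $\Xi-\Theta=0$, $\Xi-\Omega=0$.

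Next I would check the three structural conditions required by the Glowinski--Gabay convergence result cited after the theorem statement: (i) $f$ and $g$ are proper, closed, convex — for $f$ this is trivial (linear), and for $g$ one uses that $-\log|\cdot|$ composed with the linear map $\oplus$ is convex, that $\delta_{\mathbb{S}^{ts}_{++}}$ restricted via $\oplus$ is convex, and that $p,q$ are convex (sum of an $\ell_1$ term and a polyhedral indicator); (ii) the constraint map has full row rank, or at least the relevant surjectivity/closed-range condition holds — this is immediate since the equalities are simply $\Lambda=\Gamma$, $\Theta=\Xi$, $\Omega=\Xi$, so $A_2$ is (minus) an identity block and the range condition is automatic; (iii) a solution exists and a suitable constraint qualification / existence of a KKT point holds. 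For (iii) I would invoke Theorem~\ref{thm: sol}: under Assumption~\ref{assu} the problem \eqref{eq:KS_NN}, hence \eqref{eq:KSv2}, has a non-empty bounded solution set, and since the feasible set of \eqref{eq:KSv3} has points with $\Omega\oplus\Gamma \succ 0$ in the relative interior of the effective domain (one can always perturb a feasible point to make all the constrained inequalities strict while keeping $\oplus$ positive definite), the standard strong-duality/KKT conditions for this convex program are met, so the ADMM iteration is well posed and the 2-block convergence theorem applies.

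Having set this up, the conclusion that $\{(\Gamma^k,\Omega^k,\Lambda^k,\Theta^k,\Xi^k)\}$ converges to a primal optimal solution of \eqref{eq:KSv3} — and the multipliers to a dual optimal solution — is exactly the content of the cited classical result; from the equalities $\Lambda^k\to\Gamma^\infty$, $\Xi^k\to\Omega^\infty$, one reads off that $(\Gamma^\infty,\Omega^\infty)$ is feasible and optimal for \eqref{eq:KSv2}, which is what the theorem asserts (the auxiliary variables collapse onto $\Gamma,\Omega$ in the limit). I would also note that the subproblem for the first block is uniquely solvable because of the strong convexity contributed by the augmented terms $\tfrac{\sigma}{2}\|\cdot\|_F^2$ together with the $-\log|\Omega\oplus\Gamma|$ barrier keeping iterates in $\mathbb{S}^{ts}_{++}$, and similarly for the $\Omega$-update; unique solvability of the subproblems is part of what the classical theorem needs for the two-block (as opposed to the potentially non-convergent multi-block) scheme.

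The main obstacle, as I see it, is bookkeeping rather than mathematics: one must be scrupulous that the chosen two-block partition — putting $\Xi$ and $\Gamma$ together and the remaining three variables in the other block — really does leave each per-block minimization jointly convex and solvable, despite the $\log$-det term nominally touching both $\Gamma$ (block one) and $\Omega$ (block two). The resolution is that, within a single ADMM half-step, the \emph{other} block is frozen, so the $\log$-det term is a convex function of the active block alone; the only genuine subtlety is to confirm that the $\delta_{\mathbb{S}^{ts}_{++}}(\Omega\oplus\Gamma)$ term is handled consistently (it is kept attached to whichever block currently varies $\Gamma$ or $\Omega$) and that the positive-definiteness of the iterates is preserved along the recursion — which follows because the $-\log|\cdot|$ term is an essential-smooth barrier, forcing its argument to stay in the open cone at every iteration. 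Once this is pinned down, no further estimates are needed and the theorem follows verbatim from \cite{glowinski1975approximation,gabay1976dual}.
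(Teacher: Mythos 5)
Your overall strategy --- reduce Theorem~\ref{thm:convergence} to the classical two-block ADMM convergence theorem --- is exactly what the paper does; indeed the paper offers nothing beyond the citation, so your write-up is already more explicit than the published argument. The problem is that the one point you flag as "the careful point" is precisely where the reduction breaks, and your proposed resolution does not repair it. The classical two-block theorem applies to problems of the form $\min\{f(x_1)+g(x_2):A_1x_1+A_2x_2=b\}$ with a \emph{block-separable} objective. In \eqref{eq:KSv3} with the split $x_1=(\Xi,\Gamma)$, $x_2=(\Lambda,\Theta,\Omega)$, the term $-\log|\Omega\oplus\Gamma|+\delta_{\mathbb{S}^{ts}_{++}}(\Omega\oplus\Gamma)$ is a joint function of $\Gamma$ (first block) and $\Omega$ (second block) and cannot be written as $f(x_1)+g(x_2)$. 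Observing that each half-step remains a well-posed convex subproblem because the other block is frozen only shows the iteration is well defined; that is true of any Gauss--Seidel sweep and is not among the hypotheses of the classical result you invoke. What Algorithm~\ref{alg: admm} actually is, is an alternating-minimization ADMM with a nonseparable coupling term; convergence of such schemes is not covered by the classical two-block theory, and the extensions that do treat coupled objectives typically require the coupling term to have a Lipschitz-continuous gradient, which $-\log|\cdot\oplus\cdot|$ lacks near the boundary of the positive definite cone, so they do not apply verbatim either.

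Concretely, then, the gap is that you never exhibit separable $f,g$ for which Algorithm~\ref{alg: admm} is the classical two-block ADMM, and no such exhibition is possible with the block partition the algorithm uses. Closing it would require either (i) a reformulation in which the log-determinant term sits entirely inside one block (which would change the updates, since $\Gamma$ and $\Omega$ would then have to be minimized jointly in that block), or (ii) an appeal to a convergence theory for majorized or Gauss--Seidel ADMM with coupled objectives, together with verification of its stronger smoothness or boundedness hypotheses along the iterates (for which the boundedness supplied by Theorem~\ref{thm: sol} would be a natural starting point). The remaining ingredients of your proposal --- properness, closedness, and convexity of the pieces; the identity structure of the constraint maps; existence of solutions via Theorem~\ref{thm: sol}; and a Slater point such as $\Gamma=\Lambda=I_t$, $\Omega=\Theta=\Xi=I_s$ --- are correct and are exactly what would be needed once the separability issue is resolved, so the bookkeeping is not the obstacle; the missing idea is how to handle the coupled log-determinant term.
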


The bottleneck in implementing Algorithm~\ref{alg: admm} is the following steps in the $k$-th iteration
\begin{align*}
	\Gamma^{k+1}
        &=\underset{\Gamma\in \mathbb{S}^{t}}{\arg\min}
	\left\{ -\frac{1}{\sigma}\log|\Omega^k \oplus \Gamma|
	+\frac{1}{2}\|\Gamma - \widetilde{\Gamma}_k \|_F^2
	\right\},\\
        \Omega^{k+1}
	  &= \underset{\Omega\in \mathbb{S}^{s}}{\arg\min}
	\left\{
	-\frac{1}{\sigma}\log|\Omega \oplus \Gamma^{k+1}|+\frac{1}{2}\|\Omega -\widetilde{\Omega}_k \|_F^2
	\right\},
\end{align*}
given some $\widetilde{\Gamma}_k\in \mathbb{S}^t$ and  $\widetilde{\Omega}_k\in \mathbb{S}^s$.
In the subsequent section, we provide an efficient procedure for this.

\subsection{Proximal Operators Associated with the Negative Log-determinant KS Function}
\label{sec: prox}

Given $\Gamma \in \mathbb{S}^t$ and $\beta>0$, we investigate the proximal operator associated with $-\beta\log|\cdot \oplus \Gamma|$ defined by
\begin{align*}
	\Psi_{{\rm Left},\beta,\Gamma}(\Omega) \!=\! \underset{\Upsilon \in \mathbb{S}^s}{\arg\min} \ \Big\{\frac{1}{2}\|\Upsilon - \Omega\|_F^2 -\beta \log|\Upsilon \oplus \Gamma|
	\Big\},
\end{align*}
for $\Omega\in \mathbb{S}^s$. The following proposition gives an efficient procedure to compute $\Psi_{{\rm Left},\beta,\Gamma}(\cdot)$.

\begin{proposition}\label{prop: left}
	Given $\beta>0$ and $\Gamma \in \mathbb{S}^t$ with eigenvalues $\lambda_1,\ldots,\lambda_t$. For any $\Omega\in \mathbb{S}^s$ with the eigenvalue decomposition $\Omega = Q\Sigma_{\Omega} Q^T$, $\Sigma_{\Omega} = {\rm Diag}(\mu_1,\ldots,\mu_s)$, we have
	\begin{align*}
		\Psi_{{\rm Left},\beta,\Gamma}(\Omega) = Q {\rm Diag}(\alpha_1,\ldots,\alpha_s) Q^T,
	\end{align*}
	where for every $j \in [s]$,  $\alpha_j$ is the unique solution to the univariate nonlinear equation
	\begin{align}
		\alpha_j-\mu_j -\sum_{i=1}^t \frac{\beta}{\alpha_j+\lambda_i}=0,
		\quad 	\alpha_j >-\min_{i \in [t]} \lambda_i.\label{eq: prox_omega}
	\end{align}
\end{proposition}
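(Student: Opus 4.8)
The plan is to exploit the fact that the Kronecker-sum $\Upsilon \oplus \Gamma$ is simultaneously diagonalizable with $\Omega$ once we fix the eigenbasis of $\Gamma$. Concretely, let $\Gamma = P\,{\rm Diag}(\lambda_1,\dots,\lambda_t)\,P^T$ be an eigendecomposition. For any $\Upsilon \in \mathbb{S}^s$, write $\Upsilon = Q\Sigma_\Upsilon Q^T$; then $\Upsilon \oplus \Gamma = (Q\otimes P)\big(\Sigma_\Upsilon \oplus {\rm Diag}(\lambda_i)\big)(Q\otimes P)^T$, so $\log|\Upsilon \oplus \Gamma| = \sum_{j=1}^s \sum_{i=1}^t \log(\nu_j + \lambda_i)$ where $\nu_1,\dots,\nu_s$ are the eigenvalues of $\Upsilon$. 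The key structural observation is that this term depends on $\Upsilon$ only through its eigenvalues, while the penalty $\frac{1}{2}\|\Upsilon - \Omega\|_F^2$ couples the two matrices.

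The first step is to argue that the minimizer shares its eigenvectors with $\Omega$. I would invoke the von Neumann trace inequality: writing $\|\Upsilon - \Omega\|_F^2 = \|\Upsilon\|_F^2 - 2\langle \Upsilon,\Omega\rangle + \|\Omega\|_F^2$, the cross term $\langle\Upsilon,\Omega\rangle$ is maximized, over all $\Upsilon$ with a prescribed spectrum, exactly when $\Upsilon$ is diagonalized by $Q$ with eigenvalues sorted in the same order as $\mu_1,\dots,\mu_s$; the remaining terms $\|\Upsilon\|_F^2 = \sum \nu_j^2$ and $\log|\Upsilon\oplus\Gamma|$ depend only on the spectrum of $\Upsilon$. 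Hence for any candidate spectrum, the best $\Upsilon$ is $Q\,{\rm Diag}(\text{that spectrum, suitably sorted})\,Q^T$, which reduces the problem to the finite-dimensional optimization $\min_{\alpha \in \mathbb{R}^s} \sum_{j=1}^s \big(\frac{1}{2}(\alpha_j - \mu_j)^2 - \beta\sum_{i=1}^t \log(\alpha_j + \lambda_i)\big)$, subject to $\alpha_j + \lambda_i > 0$ for all $i$, i.e. $\alpha_j > -\min_i \lambda_i$. One subtlety to handle carefully: the sorting constraint. Because the function $\frac12(\alpha-\mu)^2 - \beta\sum_i\log(\alpha+\lambda_i)$ is convex in $\alpha$, its unconstrained-in-order minimizer is monotone in $\mu$, so the optimal $\alpha_j$ automatically come out sorted consistently with the $\mu_j$; this dispenses with the ordering issue cleanly.

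The second step is the separable one-dimensional analysis. The objective now splits over $j$, and for each $j$ we minimize $\phi_j(\alpha) := \frac{1}{2}(\alpha - \mu_j)^2 - \beta\sum_{i=1}^t \log(\alpha + \lambda_i)$ over the open interval $\alpha > -\min_i \lambda_i$. I would note that $\phi_j$ is strictly convex there (sum of a strictly convex quadratic and $-\beta\log$ of affine functions, each convex), that $\phi_j(\alpha) \to +\infty$ as $\alpha \downarrow -\min_i\lambda_i$ (the log term blows up) and as $\alpha \to +\infty$ (the quadratic dominates), so a minimizer exists and is unique. Setting $\phi_j'(\alpha) = 0$ gives exactly equation \eqref{eq: prox_omega}: $\alpha_j - \mu_j - \sum_{i=1}^t \frac{\beta}{\alpha_j + \lambda_i} = 0$. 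Assembling $\Psi_{{\rm Left},\beta,\Gamma}(\Omega) = Q\,{\rm Diag}(\alpha_1,\dots,\alpha_s)\,Q^T$ finishes the proof.

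The main obstacle is the rigorous justification that the minimizer is simultaneously diagonalizable with $\Omega$ — in particular, dealing cleanly with the case of repeated eigenvalues of $\Omega$ (where the eigenbasis $Q$ is not unique) and making the von Neumann / rearrangement argument airtight rather than heuristic. Everything after that reduction is routine convex calculus in one variable. A minor additional point worth stating explicitly is that the constructed matrix indeed lies in the effective domain, i.e. $\Upsilon \oplus \Gamma \in \mathbb{S}^{ts}_{++}$, which follows from $\alpha_j + \lambda_i > 0$ for all $i,j$ — a direct consequence of the constraint in \eqref{eq: prox_omega} together with $\alpha_j > -\min_i \lambda_i$.
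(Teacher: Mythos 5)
Your proof is correct, and it reaches the same separable spectral problem as the paper, but the key reduction step is carried out by a different mechanism. The paper first uses the conjugation invariance $\log|\Omega\oplus\Gamma|=\log|(M\Omega M^T)\oplus\Gamma|$ to show $\Psi_{{\rm Left},\beta,\Gamma}(Q\Sigma_\Omega Q^T)=Q\,\Psi_{{\rm Left},\beta,\Gamma}(\Sigma_\Omega)\,Q^T$, and then asserts that the prox of a diagonal argument is diagonal, citing the standard fact about proximal operators of orthogonally invariant functions (Parikh--Boyd, Eqn (6.11)). You instead decompose $\|\Upsilon-\Omega\|_F^2$ and apply von Neumann's trace inequality to show that, for each fixed spectrum of $\Upsilon$, the optimal orientation aligns the eigenvectors of $\Upsilon$ with those of $\Omega$ in sorted order; this buys you a self-contained justification of exactly the step the paper delegates to a citation, and your observation that the scalar map $\mu\mapsto\alpha(\mu)$ is increasing (so the optimal $\alpha_j$ inherit the ordering of the $\mu_j$, and repeated eigenvalues of $\Omega$ produce equal $\alpha_j$'s, making the formula independent of the choice of $Q$) cleanly disposes of the sorting and non-uniqueness subtleties you flag. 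The subsequent one-dimensional analysis is equivalent in both proofs: the paper argues via monotonicity and boundary limits of the derivative $h$, while you argue via strict convexity and coercivity of $\phi_j$ on the open interval; these give the same existence and uniqueness of the root of \eqref{eq: prox_omega}.
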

\begin{proof}
	The first part of the proof is based on the orthogonally invariant property of $\log|\cdot \oplus \Gamma|$. Namely, given $\Omega\in \mathbb{S}^s$, it holds that
	$
	\log|\Omega \oplus \Gamma| = \log| (M\Omega M^T) \oplus \Gamma|
	$
	for any orthogonal matrix $M$. This implies that for any $\Omega\in \mathbb{S}^s$ with eigenvalue decomposition $\Omega = Q\Sigma_{\Omega} Q^T$, we have
	\begin{align*}
		&\Psi_{{\rm Left},\beta,\Gamma}(\Omega)\\
&=\underset{\Upsilon \in \mathbb{S}^s}{\arg\min}\ \Big\{\frac{1}{2}\|\Upsilon - Q\Sigma_{\Omega} Q^T\|_F^2 -\beta \log|\Upsilon \oplus \Gamma|
		\Big\} \\
		&= \underset{\Upsilon \in \mathbb{S}^s}{\arg\min}\ \Big\{\frac{1}{2}\|Q^T\Upsilon Q - \Sigma_{\Omega} \|_F^2 \!-\!\beta \log|(Q^T\Upsilon Q) \oplus \Gamma| \Big\} \\
&= Q\Psi_{{\rm Left},\beta,\Gamma}(\Sigma_{\Omega})Q^T.	
	\end{align*}
	The above equality for the orthogonally invariant functions can also been found in Eqn (6.11) of \cite{parikh2014proximal}.
	Since $\Sigma_{\Omega}$ is a diagonal matrix and $\log|\cdot \oplus \Gamma|$ is orthogonally invariant, we can see that  $\Psi_{{\rm Left},\beta,\Gamma}(\Sigma_{\Omega})$ is also a diagonal matrix. Moreover, $\Psi_{{\rm Left},\beta,\Gamma}(\Sigma_{\Omega})={\rm Diag}(\alpha)$ satisfies
	$
	\textstyle \alpha = \underset{\alpha \in \mathbb{R}^s}{\arg\min} \Big\{
	\sum_{j=1}^s (\alpha_j-\mu_j)^2/2-\beta \sum_{i=1}^t \sum_{j=1}^s \log (\alpha_j+\lambda_i)\Big\},
	$
	which holds due to the fact the eigenvalues of the KS of two matrices are the pairwise sums of the eigenvalues of the two matrices \citep{horn1991topics}. Note that the above minimization problem can be solved component-wisely.  Thus $\alpha_j$ should satisfy \eqref{eq: prox_omega} due to the first-order optimality condition, for every $j \in [s]$.
	Lastly, we prove that for any given $x\in\mathbb{R}$ the equation $h(y)= (y-x)/\beta- \sum_{i=1}^{t} 1/(y+\lambda_i)=0$ admits a unique solution on the interval $(-\min_{i}\lambda_i,+\infty)$. It is true because $h$ is increasing on $(-\min_{i}\lambda_i,+\infty)$, $\lim_{y\to (-\min_{i}\lambda_i)+} h(y) = -\infty$, and $\lim_{y\to +\infty} h(y) = +\infty$.
	This completes the proof.
\end{proof}

Given $\lambda_1, \ldots, \lambda_{t}$ and $\beta>0$, let the univariate function $\psi(\cdot\ ;\lambda_1,\dots,\lambda_{t},\beta):\mathbb{R}\to \mathbb{R}$ be defined as
\begin{align}
	&\psi(x;\lambda_1,\dots,\lambda_{t},\beta)\notag \\
	&:=
	\Bigg\{y \ \Bigg| \ \frac{y-x}{\beta} = \sum_{i=1}^{t} \frac{1}{y+\lambda_i},
	y>-\min_{i \in [t]}\lambda_i\Bigg\}.\label{def-psi}
\end{align}
which is well-defined according to the proof of Proposition \ref{prop: left}. Moreover, the function value of $\psi(\cdot\ ;\lambda_1,\dots,\lambda_{t},\beta)$ can be calculated by the Newton's method or the bisection method. By solving $s$ univariate nonlinear equations, we obtain that
$
\alpha_j = \psi(\mu_j;\lambda_1,\dots,\lambda_{t},\beta),\,j \in [s].
$

Similarly, given $\Omega \in \mathbb{S}^s$ and $\beta>0$, the proximal operator associated with $-\beta \log|\Omega \oplus \cdot|$ is
 \begin{align*}
 	\Psi_{{\rm Right},\beta,\Omega}(\Gamma)
 	&\!=\! \underset{\Delta \in \mathbb{S}^t}{\arg\min} \ \Big\{\frac{1}{2}\|\Delta - \Gamma\|_F^2 -\beta \log|\Omega \oplus \Delta|
 	\Big\},
 \end{align*}
for $\Gamma \in \mathbb{S}^t$.
Analogous to Proposition \ref{prop: left}, we can provide an efficient procedure to compute $\Psi_{{\rm Right},\beta,\Omega}(\cdot)$. Details are in Appendix~\ref{sec:prox_psi}.

\begin{algorithm}[H]
	\caption{: {\tt DNNLasso}}
	\label{alg: admm}
	\begin{algorithmic}
		\STATE {\bfseries Input:} Given sample covariance matrices $R\in\mathbb{S}^t_+$, $W\in \mathbb{S}^s_+$ and a parameter $\lambda_0>0$.
		\STATE {\bfseries Initialization:} Set $\lambda_T = \lambda_0 s$, $\lambda_S = \lambda_0 t$, $\tau = 1.618$. Set $k\leftarrow 0$. Choose $\sigma > 0$ and an initial point $(\Omega^0,\Lambda^0,\Theta^0,X^0,Y^0,U^0)\in  \mathbb{S}^{s}\times\mathbb{S}^{t}\times \mathbb{S}^{s}\times \mathbb{S}^{t}\times \mathbb{S}^{s}\times \mathbb{S}^{s}$.
		
		\REPEAT
		\STATE {\bfseries Step 1}. Compute
		\begin{align*}
			&\Gamma^{k+1}=\Psi_{{\rm Right},1/\sigma,\Omega^k}(\Lambda^k +\frac{X^k}{\sigma} -\frac{R}{\sigma}),\\
			& \Xi^{k+1} = \frac{1}{2} ( \Theta^k + \Omega^k + \frac{Y^k+U^k-W}{\sigma} ).
		\end{align*}
		\\
		\STATE {\bfseries Step 2}. Let $\widetilde{\Lambda}= \Gamma^{k+1} -X^k/\sigma$, $\widetilde{\Theta}= \Xi^{k+1} - Y^k/\sigma$. Compute $\Lambda^{k+1}\in \mathbb{S}^t$ and $\Theta^{k+1},\Omega^{k+1}\in \mathbb{S}^s$ as
		\begin{align*}
			\Lambda^{k+1}_{ij} &= \left\{
			\begin{aligned}
				&\max(0, \widetilde{\Lambda}_{ii}) &&\mbox{if }i = j,\\
				&{\rm sgn}( \widetilde{\Lambda}_{ij})\max(| \widetilde{\Lambda}_{ij}|- \frac{\lambda_T}{\sigma},0) &&\mbox{if }i \neq j,
			\end{aligned}
			\right. \\
			\Theta^{k+1}_{ij} &= \left\{
			\begin{aligned}
				&\max(0, \widetilde{\Theta}_{ii}) &&\mbox{if }i = j,\\
				&{\rm sgn}( \widetilde{\Theta}_{ij})\max(| \widetilde{\Theta}_{ij}|-\frac{\lambda_S}{\sigma},0) &&\mbox{if }i \neq j,
			\end{aligned}
			\right. \\
			\Omega^{k+1} &=\Psi_{{\rm Left},1/\sigma,\Gamma^{k+1}}(\Xi^{k+1}  -\frac{U^k}{\sigma}).
		\end{align*}
		\\[5pt]
		\STATE {\bfseries Step 3}. Update the multipliers by
		\begin{align*}
			&X^{k+1} = X^k-\tau\sigma (\Gamma^{k+1} - \Lambda^{k+1}),\\
			& Y^{k+1} = Y^k-\tau\sigma (\Xi^{k+1} - \Theta^{k+1}),\\
			& U^{k+1} = U^k-\tau\sigma (\Xi^{k+1} - \Omega^{k+1}).
		\end{align*}
		\\
		\STATE {\bfseries Step 4}. Set $k\leftarrow k+1$.
		\UNTIL{Stopping criterion is satisfied.}
		
		\STATE {\bfseries Output:} 
An approximate solution $(\widehat{\Gamma}, \widehat{\Omega})$ 
computed as follows: $(\widehat{\Gamma}, \widehat{\Omega})=(\Gamma^{k},\Omega^{k})$ if $\Gamma^{k} \succ 0$, $\Omega^{k} \succ 0$; and $(\widehat{\Gamma}, \widehat{\Omega})=(\Gamma^{k}-  c I_t,\Omega^{k} + c I_s)$ with $c = (\lambda_{\rm min}(\Gamma^{k})-\lambda_{\rm min}(\Omega^{k}))/2$ otherwise.
	\end{algorithmic}
\end{algorithm}

\subsection{The Full Algorithm}
We provide the pseudocode of {\tt DNNLasso} in Algorithm \ref{alg: admm}, where the computation of $\Psi_{{\rm Right},\beta,\Omega}(\cdot)$ and $\Psi_{{\rm Left},\beta,\Gamma}(\cdot)$ is described in Section \ref{sec: prox}. Note that the non-negative constraints on diagonal elements only bring in the computation of $\max(0,\widetilde{\Lambda}_{ii})$ and $\max(0,\widetilde{\Theta}_{ii}) $, of which the cost is negligible.

We provide in Table~\ref{table: comparison} a comparison of {\tt DNNLasso} with three existing methods  {\tt BiGLasso}, {\tt TeraLasso}, and {\tt EiGLasso}, in terms of memory cost and computational cost per iteration.

\begin{table}[!ht]
	\renewcommand{\arraystretch}{1.1}
\caption{Comparison among algorithms in terms of memory cost and computational cost per iteration.} \label{table: comparison}
\begin{tabular}{ |c|c|c|  }
	\hline
	& Memory cost & Computational cost \\
	\hline
	{\tt BiGLasso}   & $O(t^2s^2)$    & $O(N_1t^3+N_1s^3)$\\
	\hline
	{\tt TeraLasso} & $O(ts+t^2+s^2)$ & $O(2ts+t^3+s^3)$ \\
	\hline
	{\tt EiGLasso} & $O(Kt^2+Ks^2)$ & $O(N_2Kt^3+N_2Ks^3)$\\
	\hline
	{\tt DNNLasso} & $O(t^2+s^2)$ & $O(t^3+s^3)$ \\
	\hline
\end{tabular}
\end{table}

In Table~\ref{table: comparison},  (1) $N_1$ represents the average number of iterations of the subroutines in {\tt BiGLasso} (i.e., the coordinate descent procedure implemented in {\tt GLasso} to estimate the precision
matrix for a simple graph); (2) $K\leq \min(t,s)$ is a user-specified parameter in the Hessian approximation of {\tt EiGLasso}, which is typically chosen within the range from $1$ to $10$. The default setting is $K=1$ in their codes. 
(3) $N_2$ represents the average number of iterations of the subroutines in {\tt EiGLasso} (i.e., the coordinate descent method to compute the Newton directions 
by minimizing second-order approximations of the objective function).

\section{NUMERICAL EXPERIMENTS}
\label{sec: experiment}
We compare our {\tt DNNLasso} with {\tt TeraLasso}\footnote{\url{https://github.com/kgreenewald/teralasso}} \citep{greenewald2019tensor} and {\tt EiGLasso}\footnote{\url{https://github.com/SeyoungKimLab/EiGLasso}} \citep{yoon2022eiglasso} on both synthetic and real data, and we use their default settings for parameters and initialization. All experiments were conducted in Matlab (version 9.11) on a Windows workstation (32-core, Intel Xeon Gold 6226R @ 2.90GHz, 128 Gigabytes of RAM).
Since the ADMM is a primal-dual method, we terminate it when the relative KKT error is less than a given tolerance,  for example, $10^{-6}$. Details are in Appendix~\ref{sec:relativekkt}.

As a warm-start of our implementation, we first run a simpler variant of {\tt DNNLasso} by eliminating the auxiliary variables $\Xi$. As we can see from \eqref{eq:KSv3}, $\Xi$ is a ``duplicate'' of the column-wise precision matrix $\Omega$, and at the optimal point they should be identical, i.e., $\Xi = \Omega$. Without $\Xi$,  this variant is simpler and has less variables compared with {\tt DNNLasso}. The pseudocode of this variant is given in Appendix~\ref{sec:admm-3block}.

\subsection{Synthetic Data}
We use two types of graph structures by \cite{yoon2022eiglasso} for generating the ground truth $\Gamma\in\mathbb{S}^{t}_{++}$ (the same for $\Omega\in\mathbb{S}^{s}_{++}$). And we sample $n$ observations from the Gaussian distribution ${\cal N}(0, (\Omega \oplus \Gamma)^{-1})$.

\noindent {\bf Type 1.} We first generate a sparse matrix $A\in\mathbb{R}^{t\times t}$ where $P(A_{ij} = -1)=\frac{1}{2}(1-\rho)$, $P(A_{ij} = 1)=\frac{1}{2}(1-\rho)$, $P(A_{ij} = 0)=\rho$, and $\rho\geq0$ is chosen such that $A$ roughly has  $10 t$ nonzero entries. Then we set $\Gamma = AA^T + 10^{-4} I_t + {\rm diag}(d_1,\dots,d_{t})$ with $d_i$ uniformly random on  $[0,0.1]$.

\noindent {\bf Type 2.} We set $\Gamma\in\mathbb{R}^{t\times t}$ to be block diagonal which contains 10 blocks and each block is generated as a graph in {\bf Type 1}. In this case we choose $\rho$ such that there are $t$ nonzero entries in each block.


\begin{figure}[!ht]
	\centering
	\subfigure[][$s=t=500$]{\includegraphics[width=0.237\textwidth]{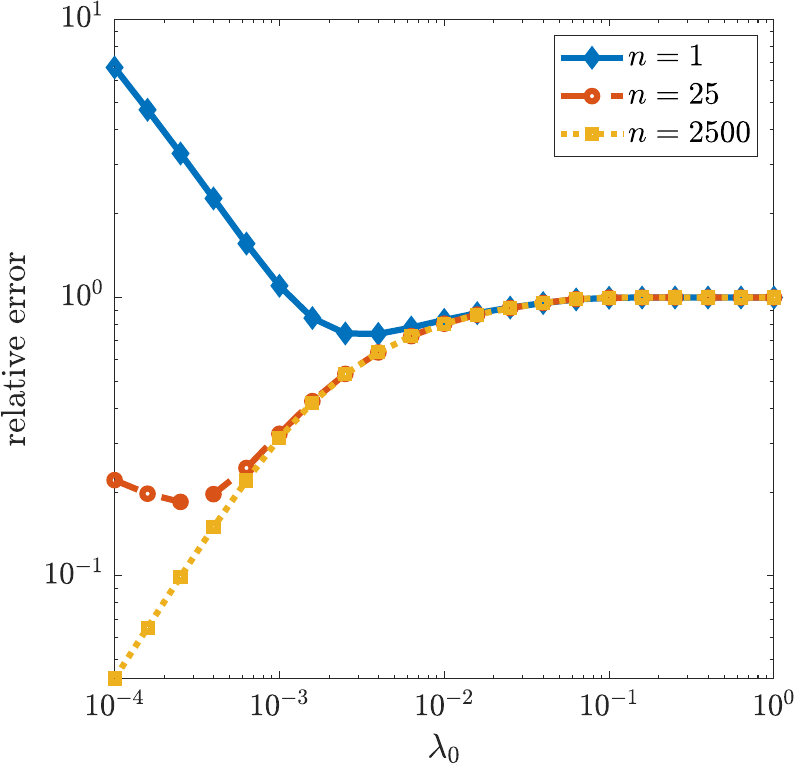}}\,
	\subfigure[][$s=t=500$]{\includegraphics[width=0.23\textwidth]{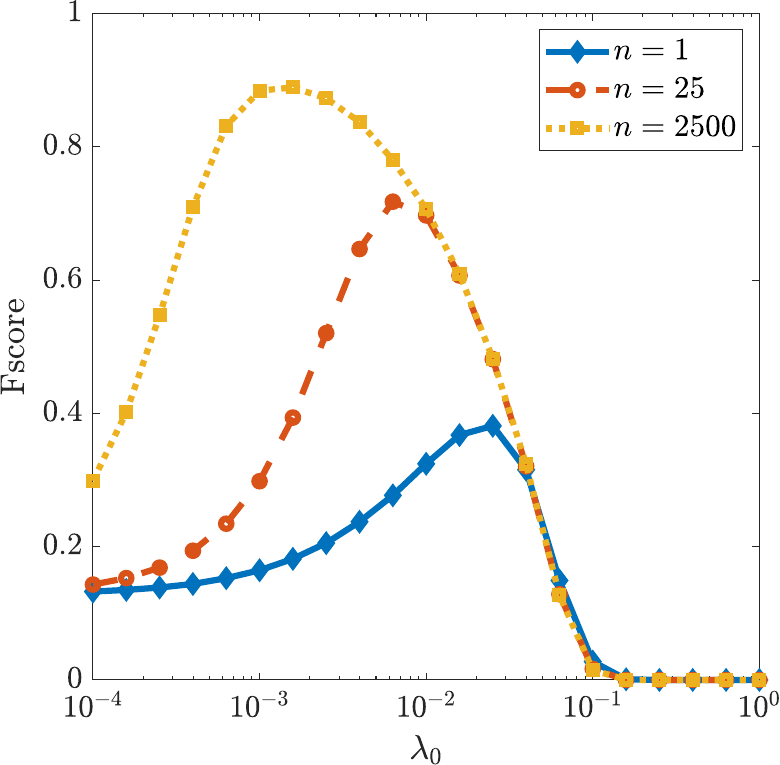}}
	\subfigure[][$s=100,t=500$]{\includegraphics[width=0.237\textwidth]{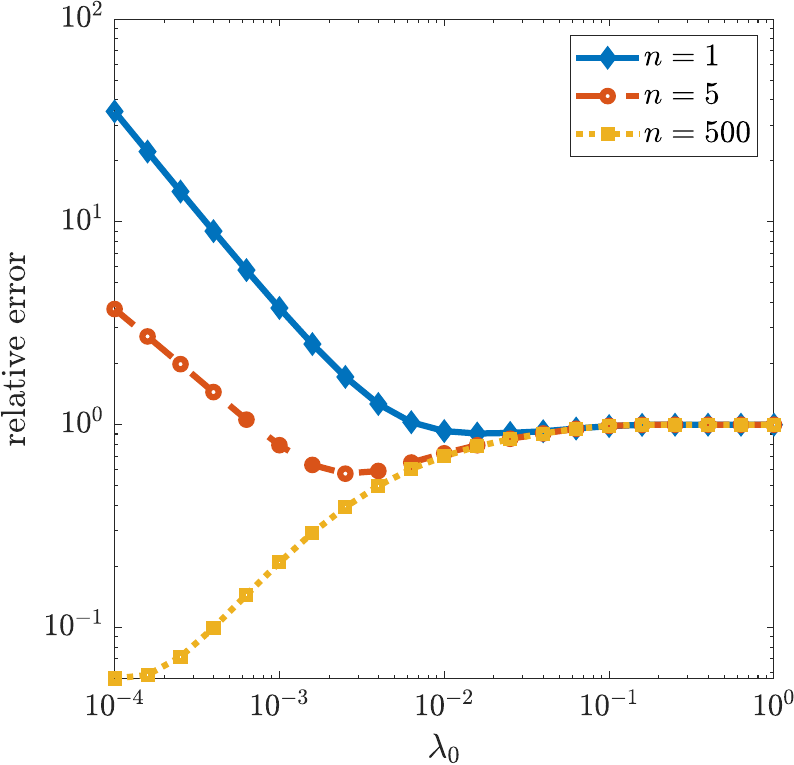}}\,
	\subfigure[][$s=100,t=500$]{\includegraphics[width=0.23\textwidth]{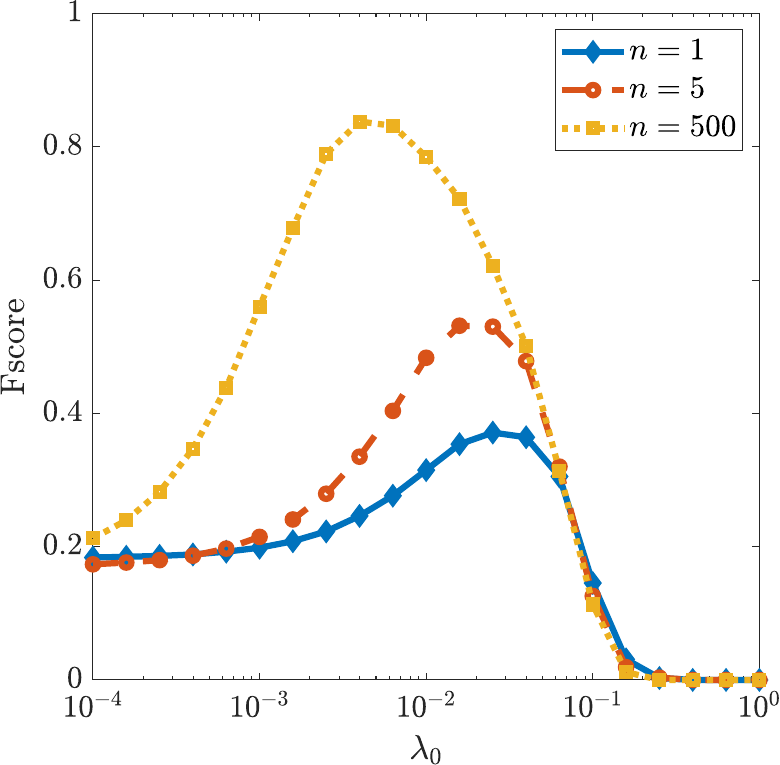}}
	\caption{Relative error (a,c) / Fscore (b,d) against $\lambda_0$ for synthetic graphs of {\bf Type 2} with dimension $s=t=500$ (a,b) / $s=100,t=500$ (c,d), and sample size $n = 1$, $st/10000$ or $st/100$.
	}\label{fig:syn-fscore}
\end{figure}

We start from medium-size graphs by considering a balanced graph size $s=t=500$ and an unbalanced one $s=100$, $t=500$. We vary the sample size $n$ in $ \{1,st/10000,st/100\}$ and select the regularization parameter $\lambda_0$ in \eqref{eq:KS} from the candidate set $\{10^{-4},10^{-3.9},10^{-3.8},\dots,10^{-0.1},10^0\}$.
We apply {\tt DNNLasso} for solving \eqref{eq:KS} with tolerance $10^{-6}$ to obtain an estimated solution $(\widetilde{\Gamma},\widetilde{\Omega})$. We compute the relative error of the estimated solution $(\widetilde{\Gamma},\widetilde{\Omega})$ with respect to the ground truth $(\Gamma,\Omega)$ via $(\|\widetilde{\Gamma}_{\rm off} - \Gamma_{\rm off}\|/\|\Gamma_{\rm off}\|+\|\widetilde{\Omega}_{\rm off} - \Omega_{\rm off}\|/\|\Omega_{\rm off} \|)/2$,
where the matrix ${\Gamma}_{\rm off}$ is constructed from $\Gamma$ by setting its diagonal entries to be zero. In addition, to measure the accuracy in identifying edges, we report the averaged Fscore, that is $({\rm Fscore}(\widetilde{\Gamma}_{\rm off}, \Gamma_{\rm off}) +{\rm Fscore}(\widetilde{\Omega}_{\rm off}, \Omega_{\rm off}) )/2$. Here Fscore$(\widetilde{\Gamma}_{\rm off}, \Gamma_{\rm off})=\frac{2{\rm tp}}{2{\rm tp}+{\rm fp}+{\rm fn}}$, where tp, fp, and fn denote the number of true positive, false positive, and false negative edges between the truth $\Gamma_{\rm off}$ and the estimator $\widetilde{\Gamma}_{\rm off}$, respectively.

Figure~\ref{fig:syn-fscore} plots the relative error and Fscore against $\lambda_0$ obtained by {\tt DNNLasso} for different dimensions and sample sizes. Overall, we can see that the relative error is smaller and the Fscore is higher for a larger sample size. In particular, when the sample size $n$ is $st/100$, Figures \ref{fig:syn-fscore}(b) and \ref{fig:syn-fscore}(d) show that the best Fscore is larger than $0.8$, which is close to the ideal value $1$.



\begin{figure}[!h]
	\centering \subfigure{\includegraphics[width=0.235\textwidth]{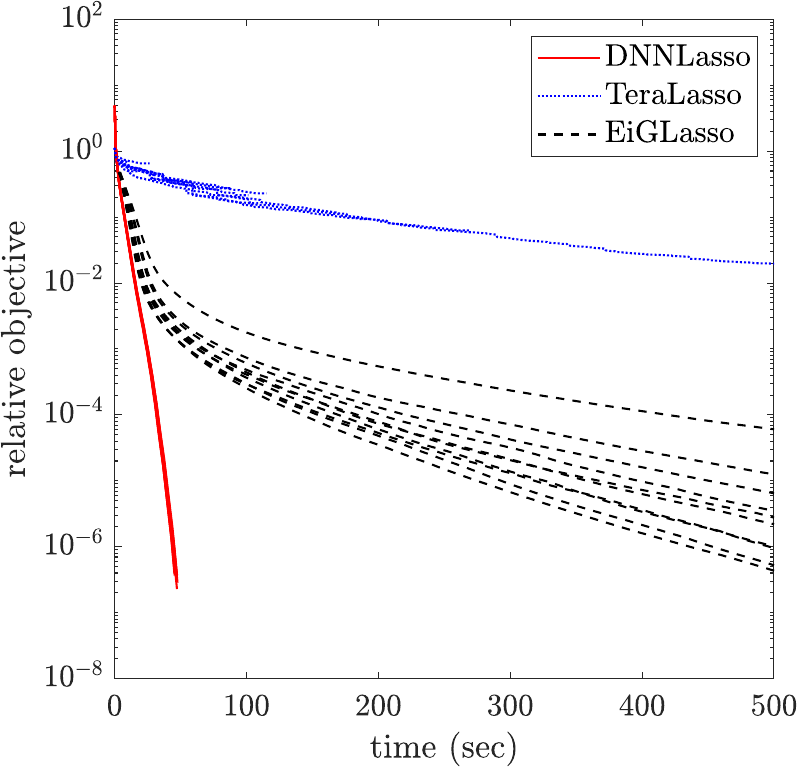}}\, \subfigure{\includegraphics[width=0.235\textwidth]{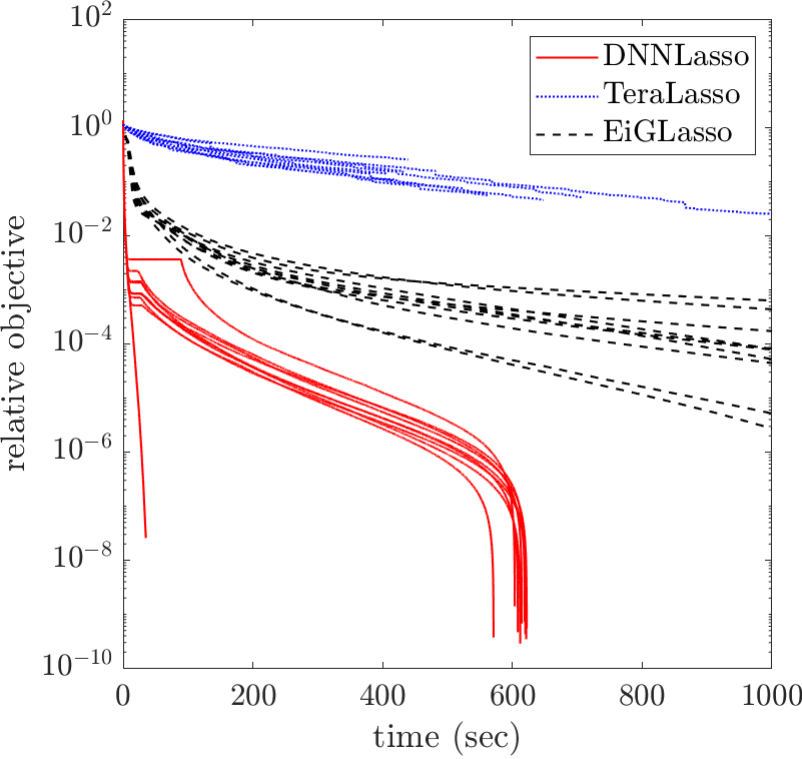}}
	\subfigure{\includegraphics[width=0.235\textwidth]{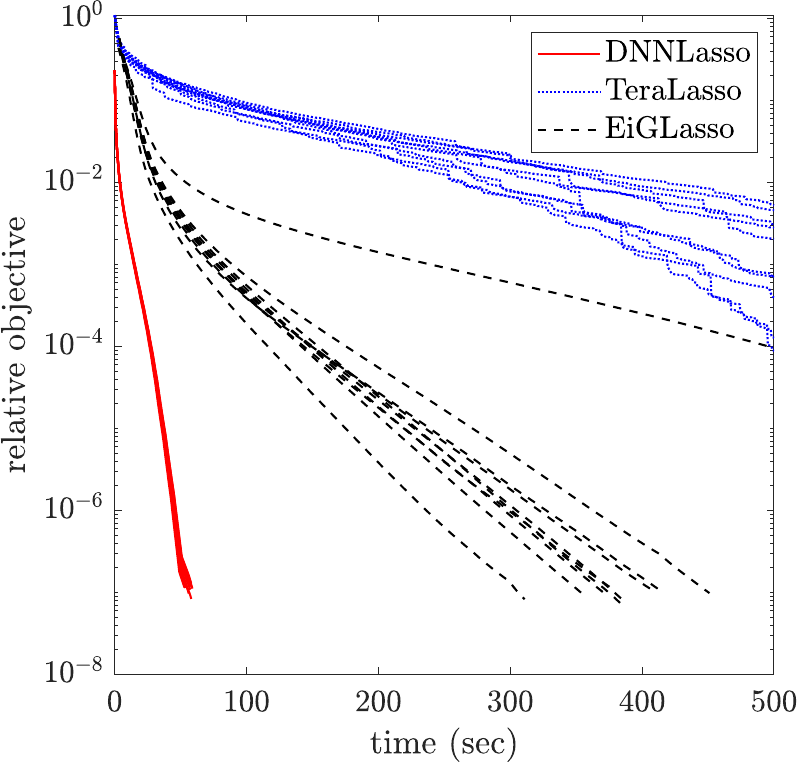}}\,
	\subfigure{\includegraphics[width=0.235\textwidth]{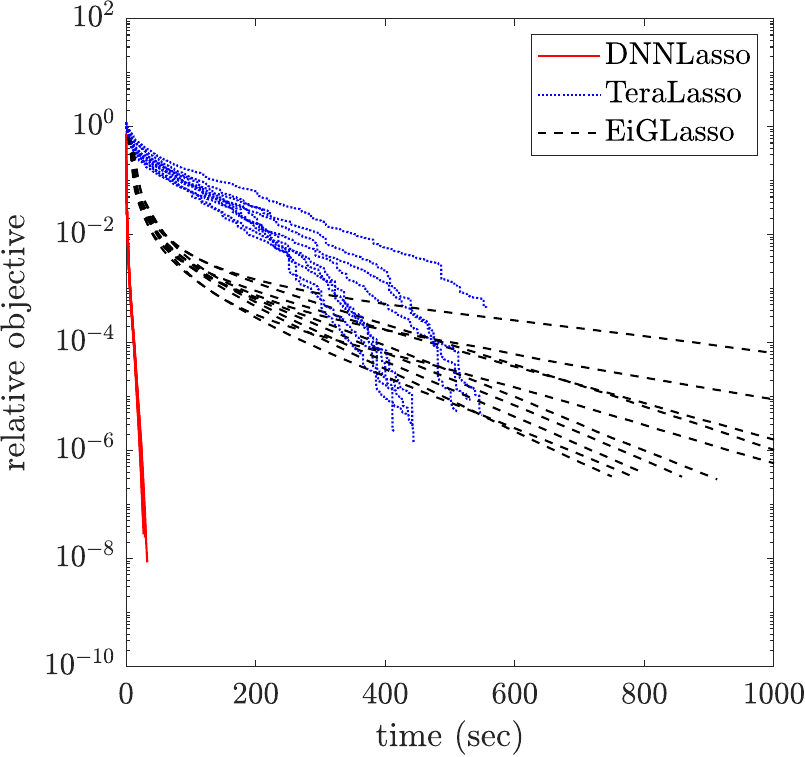}} \subfigure{\includegraphics[width=0.235\textwidth]{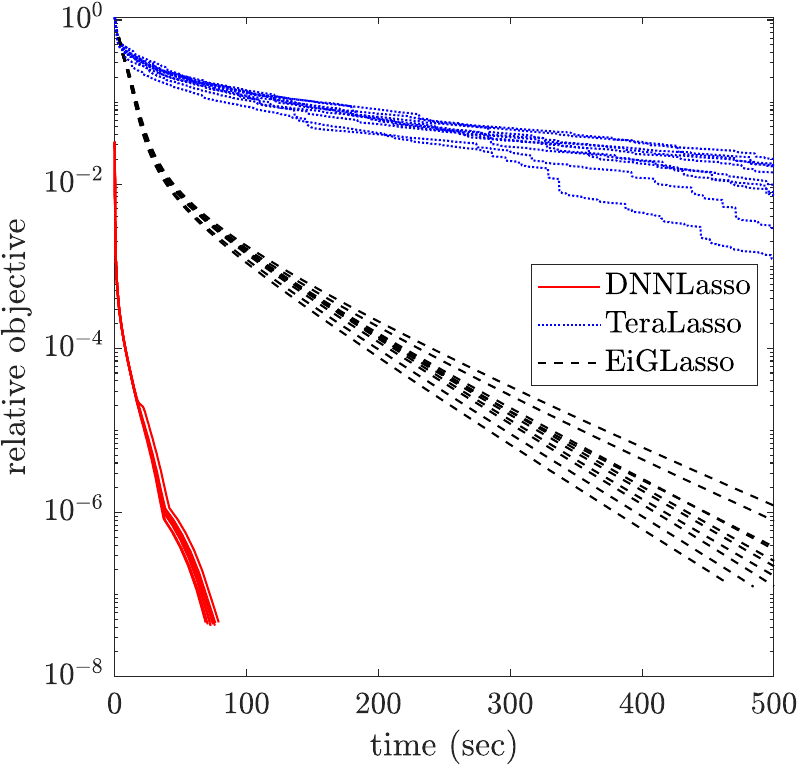}}\,
	\subfigure{\includegraphics[width=0.235\textwidth]{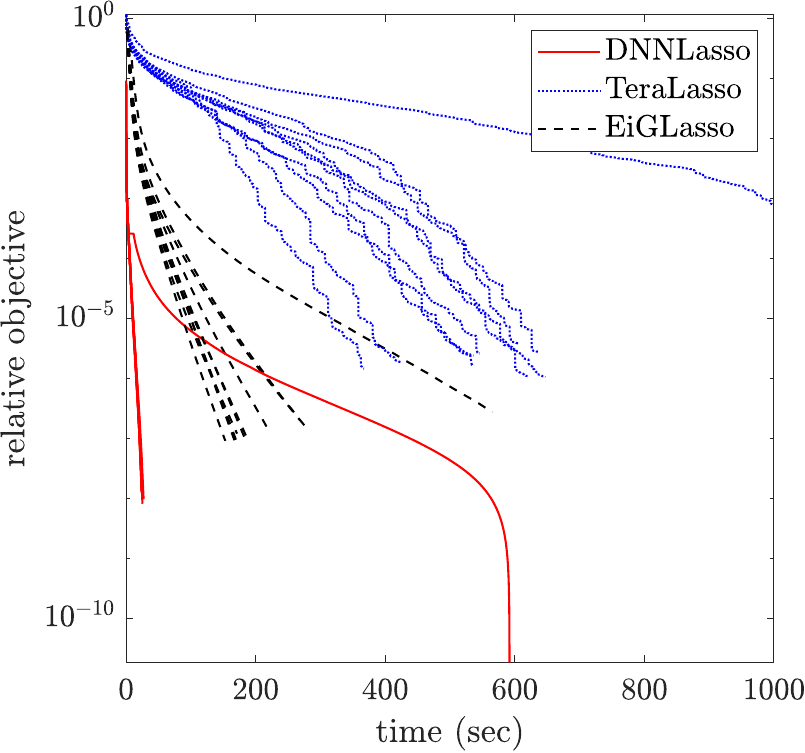}}
	\caption{Relative objective function value $(f^k - f^*)/f^*$ against time on synthetic graphs of {\bf Type 2} with dimension $s=t=500$ (left column) / $s=100,t=500$ (right column), and sample size $n = 1$, $st/10000$ or $st/100$ (rows from upper to lower). 
	}\label{fig:syn-time}
\end{figure}

We fix $\lambda_0$ to be the best parameter from the candidate set which achieves the highest Fscore (the best value can be seen from Figure~\ref{fig:syn-fscore}(b) and Figure~\ref{fig:syn-fscore}(d)). We compare the three methods {\tt DNNLasso},  {\tt TeraLasso}, and {\tt EiGLasso} on synthetic graphs of {\bf Type 2}. We use the solution of {\tt DNNLasso} with tolerance $10^{-8}$ as a benchmark and denote the corresponding objective function value as $f^*$. For the objective function value $f^k$ at the $k$-th iteration of one method, we compute the relative objective function value $(f^k - f^*)/f^*$. In Figure~\ref{fig:syn-time}, we show the relative objective function value against computational time for different methods on 10 replications. We can see from Figure~\ref{fig:syn-time} that our {\tt DNNLasso} always achieves a better objective value within a shorter time. Besides, {\tt EiGLasso} seems to be faster than {\tt TeraLasso} for a large majority of instances.

Next we compare the three methods on {\bf Type 1} graphs with relatively large balanced size $s=t=1000$ and unbalanced size $s=1000,t=400$. Since we are interested in the efficiency of each algorithm for low-sample cases, we fix the sample size $n=1$ and choose parameters $\lambda_0=10^{-2}$ or $10^{-1.6}$. Figure~\ref{fig:syn-large} illustrates the relative objective function value against computational time in different scenarios. We can see that {\tt DNNLasso} outperforms {\tt TeraLasso} and {\tt EiGLasso} by a large margin.

\begin{figure}[H]
	\centering \subfigure{\includegraphics[width=0.235\textwidth]{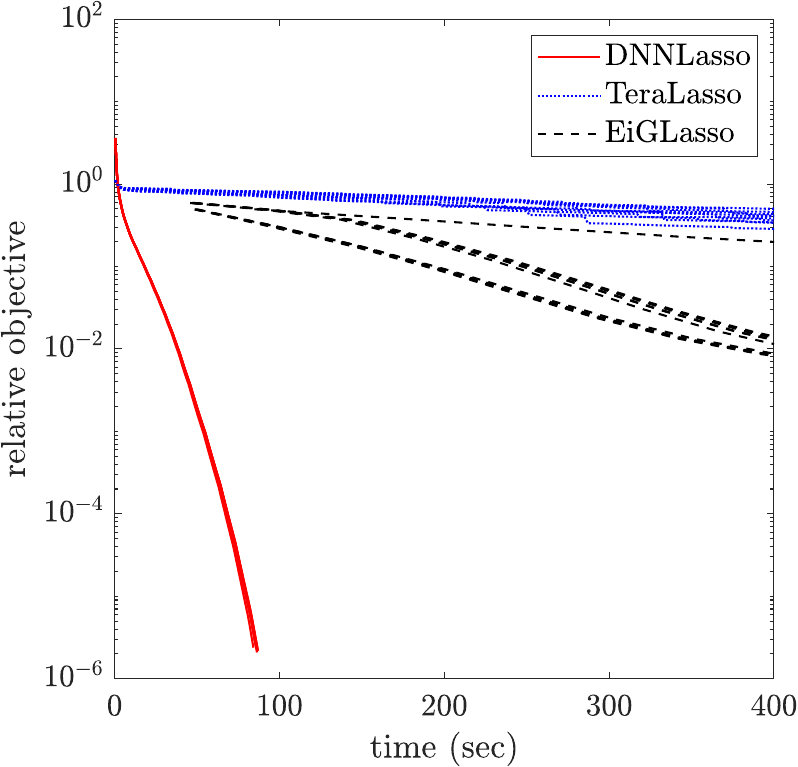}}\,
\subfigure{\includegraphics[width=0.235\textwidth]{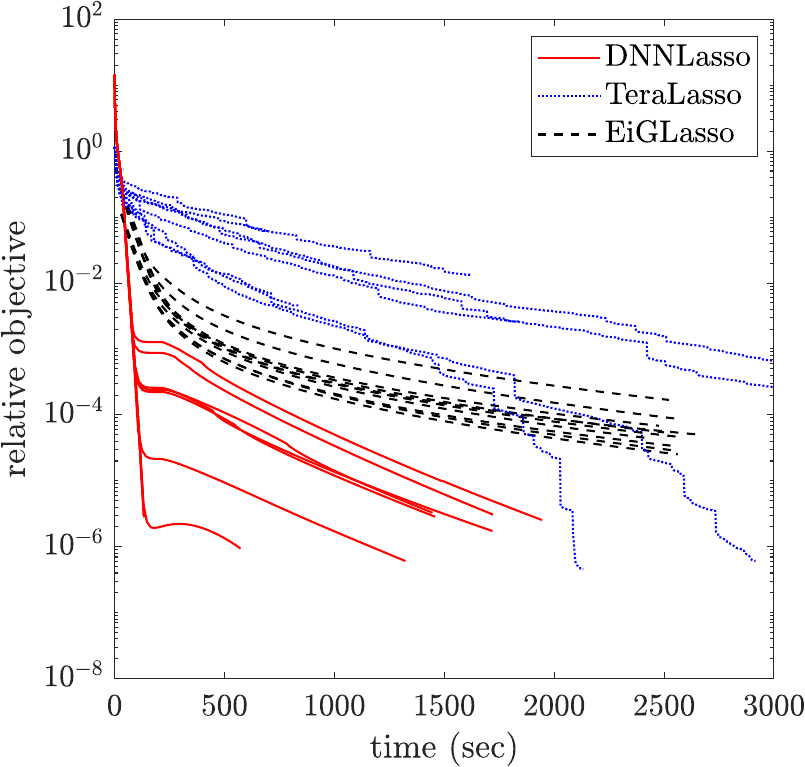}} \subfigure{\includegraphics[width=0.235\textwidth]{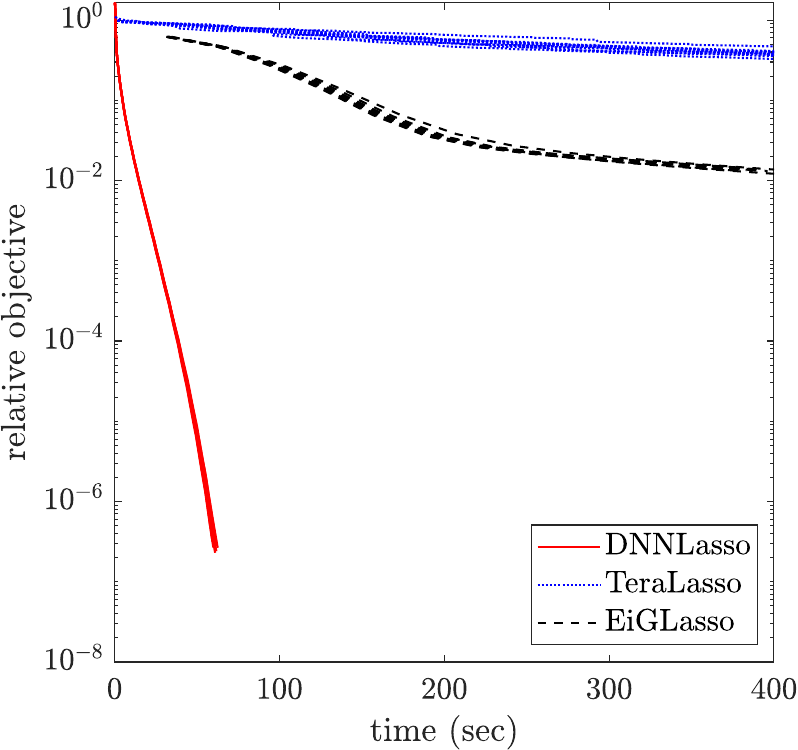}}\,
\subfigure{\includegraphics[width=0.235\textwidth]{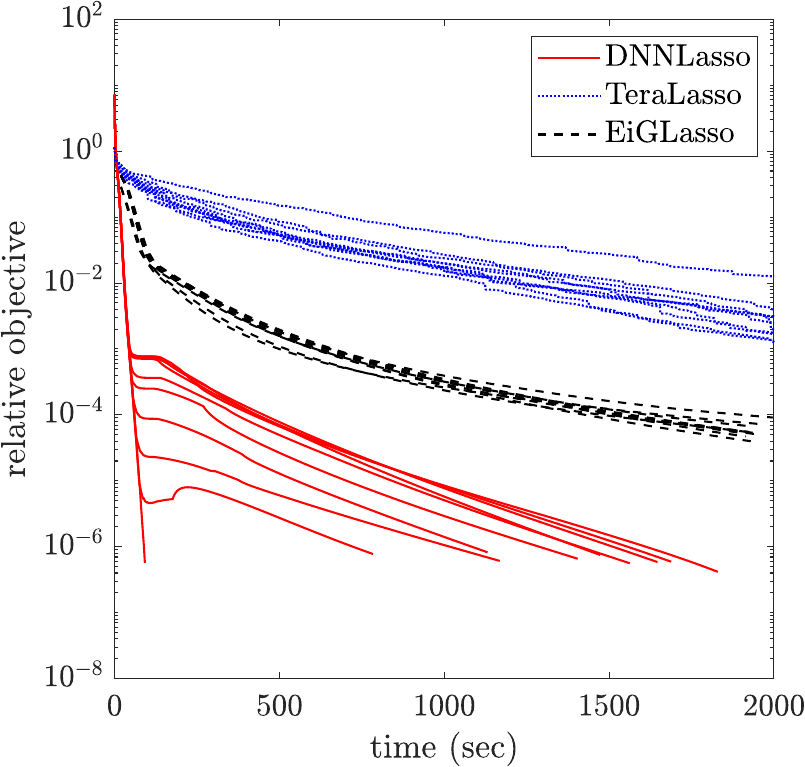}}
	\caption{Relative objective function values against time on  graphs of {\bf Type 1} with sample size $n=1$ and dimensions $s=t=1000$ (upper row) / $s=1000,t=400$ (bottom row), $\lambda_0=10^{-2}$ (left column) / $\lambda_0=10^{-1.6}$ (right column).
	}\label{fig:syn-large}
\end{figure}

\subsection{COIL100 Video Data}\label{sec-video}
In this section, we adopt the data from the Columbia Object Image Library (COIL) \citep{nene1996columbia}.
The data contains 100 objects, and for each object, it contains $s=72$ frames (color images with the resolution of $t=128\times 128$ pixels) of the rotating object from different angles (every $5^{\rm o}$). Our goal is to jointly recover the conditional dependency structure over the frames and the structure over the pixels. In consideration of the computational complexity, we choose to reduce the resolution of each frame. Likewise,  \cite{kalaitzis2013bigraphical} consider the reduced resolution of $8\times 8$. We pick one object (a box of cold medicine) from the data, which is illustrated in Figure~\ref{fig:box}. From Figure~\ref{fig:box}, we can roughly recognize the object from the compressed images of $32\times 32$ pixels in the second row, but it is hard to recognize the object from the compressed images of $8\times 8$ pixels in the third row. This implies that the reduced resolution of $32\times 32$ might be a better choice for graph learning than the reduced resolution of $8\times 8$.

\begin{figure}[!ht]
	\centering
	\includegraphics[width=0.28\textwidth]{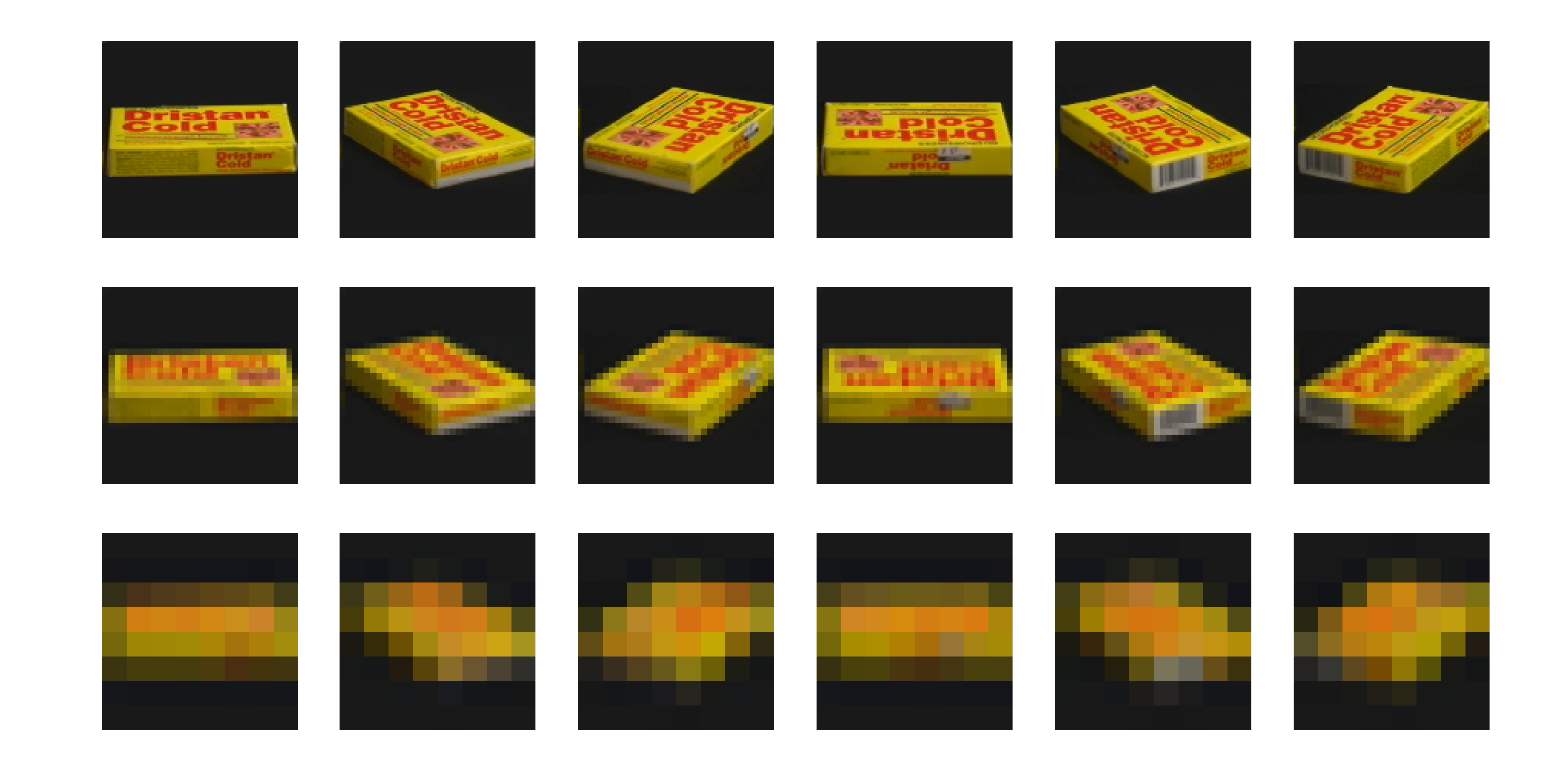}
	\caption{A rotating box of cold medicine in COIL100 video data. First row: original resolution of $128\times 128$ pixels. Second (resp. third) row: reduced resolution of $32\times 32$ (resp. $8\times 8$) pixels.}
	\label{fig:box}
\end{figure}

\begin{figure}[!ht]
	\centering
    \subfigure[][]{\includegraphics[width=0.24\textwidth]{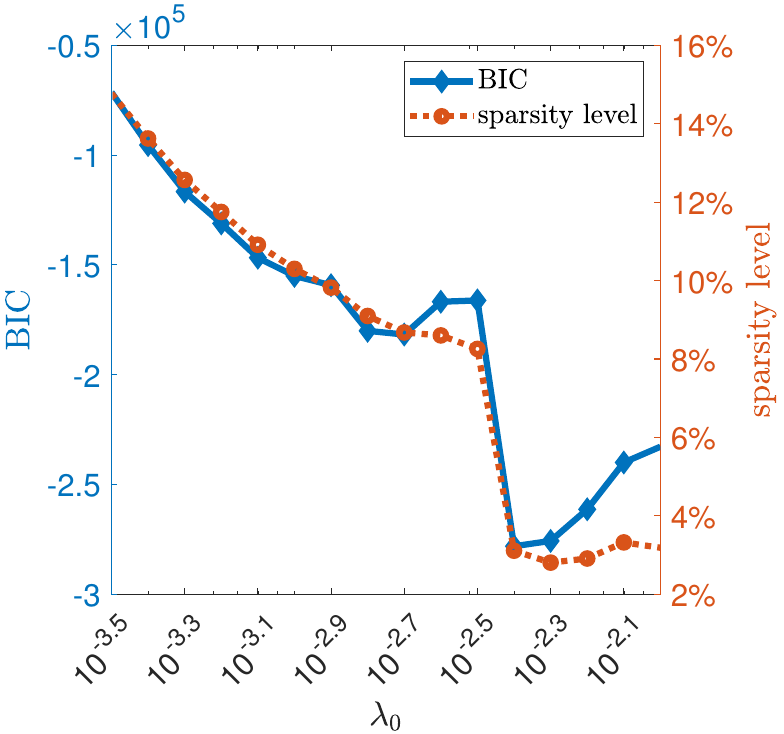}}\,\,
	\subfigure[][]{\includegraphics[width=0.22\textwidth]{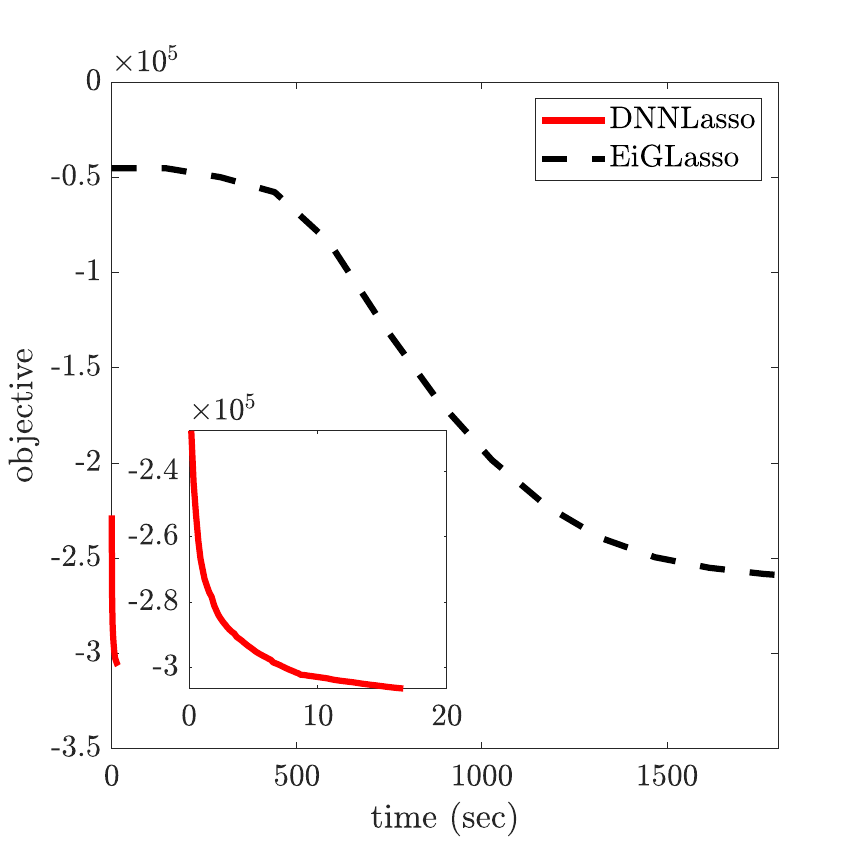}}
\\
	\subfigure[][]{
\includegraphics[width=0.19\textwidth]{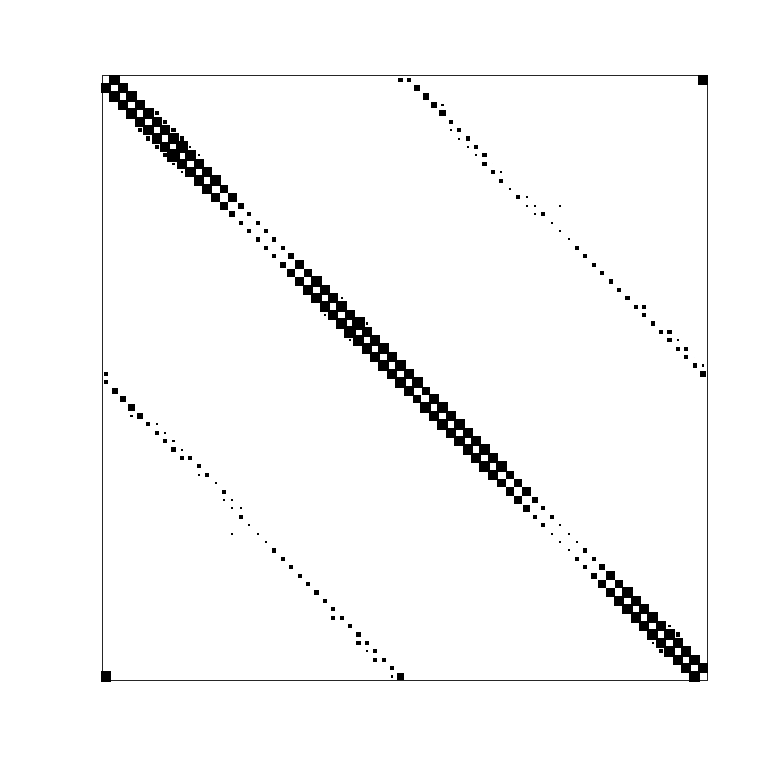}}\quad\quad
	\subfigure[][]{\includegraphics[width=0.22\textwidth]{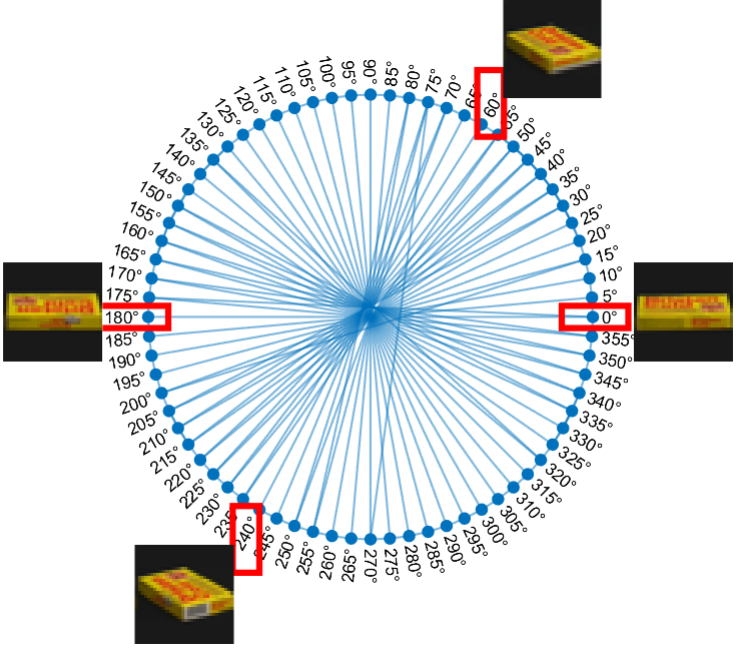}}
	\caption{On $s=72$ frames  with  $t=32\times 32$ pixels. (a) The BIC and sparsity level against $\lambda_0$. (b) The relative objective function value against computational time. 
		(c) Sparsity pattern of the  matrix $\widetilde{\Omega}\in\mathbb{S}^s$ estimated by {\tt DNNLasso} (i.e., the correlation pattern among frames from different angles).  (d) Relationship graph of frames from different angles.}
	\label{fig:coil1}
\end{figure}

\begin{figure}[!ht]
	\centering
\subfigure[][]{\includegraphics[width=0.24\textwidth]{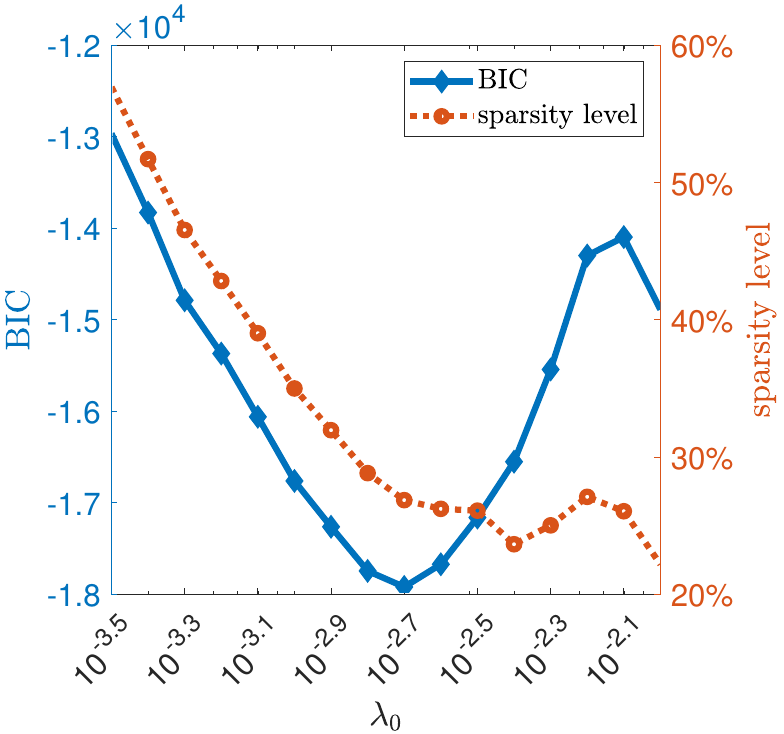}}\,\,
\subfigure[][]{\includegraphics[width=0.22\textwidth]{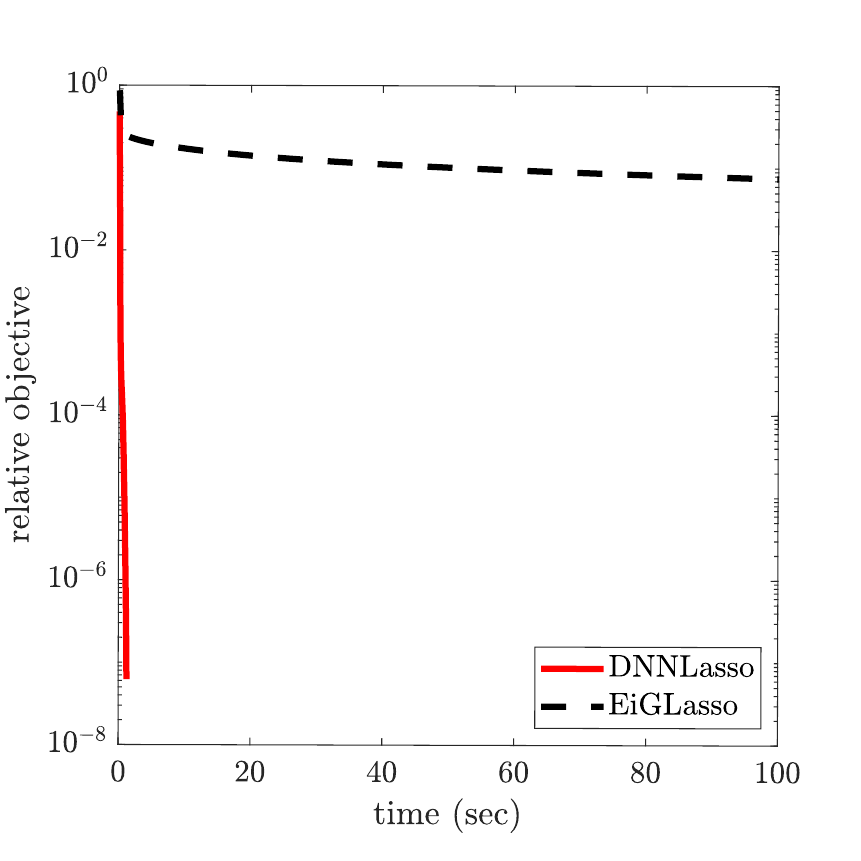}}
\\
\subfigure[][]{\includegraphics[width=0.19\textwidth]{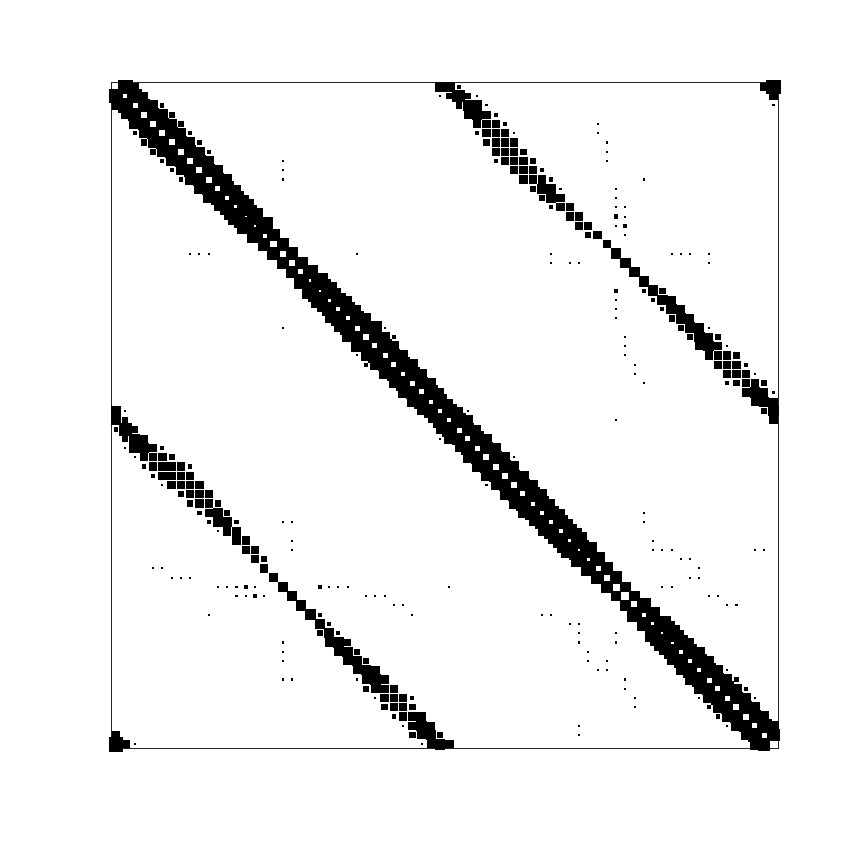}}\quad\quad
\subfigure[][]{\includegraphics[width=0.22\textwidth]{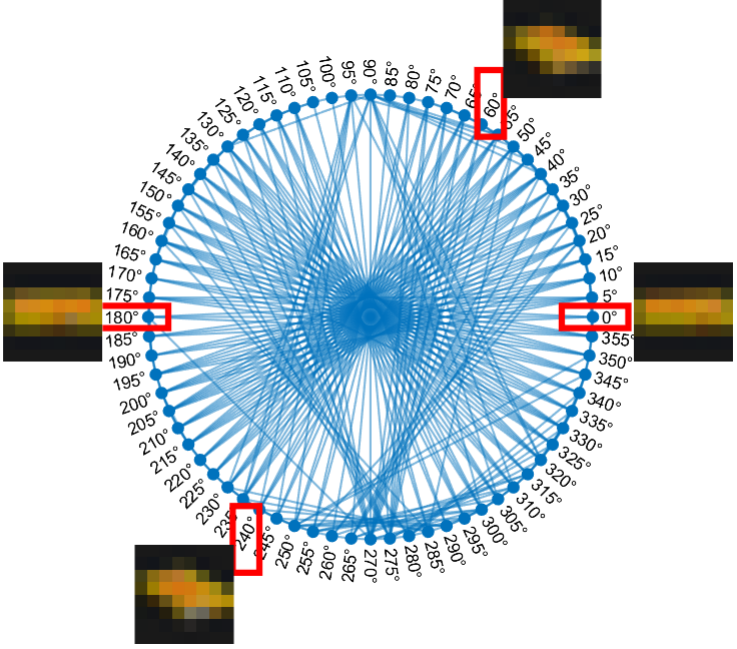}}
	\caption{On $s=72$ frames  with $t=8\times 8$ pixels. }
	\label{fig:coil1_small}
\end{figure}

We first conduct experiments on $s=72$ frames with the reduced resolution of $t=32\times 32$ pixels. We select parameter $\lambda_0$ from the set $\{10^{-3.5},10^{-3.4},\dots,10^{-2.1},10^{-2}\}$ under the Bayesian information criterion (BIC), and then compare our {\tt DNNLasso} with {\tt TeraLasso} and {\tt EiGLasso}. We terminate {\tt DNNLasso} with tolerance $5\times 10^{-3}$.

Figure~\ref{fig:coil1}(a) plots the BIC value and sparsity level against $\lambda_0$. For an estimated pair $(\widetilde{\Gamma},\widetilde{\Omega})$, the  BIC value is computed as
$
{\rm BIC}(\widetilde{\Gamma},\widetilde{\Omega}) = \!-\!\log|\widetilde{\Omega} \!\oplus\! \widetilde{\Gamma}| \!+\! \langle \widetilde{\Omega}, W\rangle \!+\! \langle \widetilde{\Gamma}, R\rangle  + (0.5\log(n)/n + 0.2 \log(st)) (  \|\widetilde{\Omega}\|_{0,{\rm off}} + \|\widetilde{\Gamma}\|_{0,{\rm off}}  ),
$
and the sparsity level is computed as
$( \|\widetilde{\Omega}\|_{0,{\rm off}} + \|\widetilde{\Gamma}\|_{0,{\rm off}})/(s(s-1) + t(t-1)), $
where $\|\cdot\|_{0,{\rm off}}$ denotes the the number of nonzero off-diagonal entries in a matrix. We can see from Figure~\ref{fig:coil1}(a) that the sparsity level is roughly decreased from $15\%$ to $3\%$ as $\lambda_0$ increases and $\lambda_0=10^{-2.4}$ achieves the best BIC. Figure~\ref{fig:coil1}(b) illustrates the objective function value against computational time  with $\lambda_0=10^{-2.4}$. We do not include {\tt TeraLasso} in this instance since it failed to return a positive definite solution $\Omega\oplus \Gamma$. From Figure~\ref{fig:coil1}(b) we can see that {\tt DNNLasso} took less than 20 seconds and achieved a much better objective function value than {\tt EiGLasso} after more than 1500 seconds. In Figure~\ref{fig:coil1}(c,d), we demonstrate the sparsity pattern of the  matrix $\widetilde{\Omega}\in\mathbb{S}^s$ estimated by {\tt DNNLasso}, namely the relationship graph of frames from different angles.
Here, we only show the off diagonal entries of $\widetilde{\Omega}$, whereby zero values ($\widetilde{\Omega}_{ij}=0$) are white, nonzero values ($\widetilde{\Omega}_{ij}\neq0$) are represented by black squares, and the size of a black square is proportional to the weight $|\widetilde{\Omega}_{ij}|$. The relationship graph in Figure~\ref{fig:coil1}(c) indicates that an image observed from $x^{\rm o}$ is connected not only to adjacent images from $(x \pm 5)^{\rm o}$, but also to images from $(x\pm180)^{\rm o}$. This observation is intuitively right as the object is a box. The two images from $0^{\rm o}$ and $180^{\rm o}$ (and also those from $60^{\rm o}$ and $240^{\rm o}$) have similar exteriors, as plotted in Figure~\ref{fig:coil1}(d).

Besides, we report the experimental results on the reduced resolution data with $t=8\times 8$ pixels. 
We find that there are some unexpected correlations among frames shown in Figure~\ref{fig:coil1_small}(d), compared with Figure~\ref{fig:coil1}(d). One possible reason is that images of $8\times 8$ pixels are too blur to identify.

More results on the synthetic data and video data are in Appendix~\ref{sec:largedata} and Appendix~\ref{sec:coil100}, respectively.

\section{CONCLUSION}
In this paper, we propose {\tt DNNLasso}, an efficient framework for estimating the KS-structured precision matrix for matrix-variate data. We develop an efficient and robust ADMM based algorithm for solving it 
and derive an explicit solution of proximal operators associated with the negative log-determinant of KS function for the first time. Numerical experiments demonstrate that {\tt DNNLasso} is superior to the existing methods by a large margin. However, our algorithm still relies on the eigenvalue decompositions in each iteration. In future work, we will consider partial or certain economical eigenvalue decomposition to further reduce computational cost. Additionally, we acknowledge that our algorithm is a first-order method that may not be efficient enough for obtaining highly accurate solutions. Therefore, we may include some second-order information into the algorithm to further speed up the process.

\subsection*{Acknowledgements}
Meixia Lin is supported by the Ministry of Education, Singapore, under its Academic Research Fund Tier 2 grant call (MOE-T2EP20123-0013) and the Singapore University of Technology and Design under MOE Tier 1 Grant SKI 2021\_02\_08.
Yangjing Zhang is supported by the National Natural Science Foundation of China under grant
number 12201617.

\bibliography{references}
\bibliographystyle{abbrvnat}

\appendix

\onecolumn
\aistatstitle{Supplementary Materials}
\aistatsauthor{ }

\section{PROOF OF PROPOSITION \ref{prop: equivalence}}
\label{sec:prof_of_equivalence}
\begin{proof}
We first give some notations. Denote the function
\begin{align*}
f_0(\Gamma,\Omega) &= -\log|\Omega \oplus \Gamma| + \langle \Omega, W\rangle + \langle \Gamma, R\rangle  +\lambda_0 s \|\Gamma\|_{1,{\rm off}} +\lambda_0 t \|\Omega\|_{1,{\rm off}} \\
&=  -\log|\Omega \oplus \Gamma| + \langle \Omega \oplus \Gamma,C \rangle + \lambda_0 \|\Omega \oplus \Gamma\|_{1,{\rm off}},
\end{align*}
where $C$ is the sample covariance matrix in \eqref{eq: graphical_lasso}. Denote the optimal objective function values of \eqref{eq:KS} and \eqref{eq:KS_NN} as $f_2^*$ and $f_3^*$, respectively. Denote the feasible set of \eqref{eq:KS} as
\begin{align*}
\mathcal{F}_2 = \{(\Gamma,\Omega)\mid \Gamma \in\mathbb{S}^t_{++},\ \Omega \in\mathbb{S}^s_{++}\},
\end{align*}
and the feasible set of \eqref{eq:KS_NN} as
\begin{align*}
\mathcal{F}_3 = \{(\Gamma,\Omega)\mid \Omega \oplus \Gamma \in\mathbb{S}^{st}_{++},\ {\rm diag}(\Omega) \geq 0,\ {\rm diag}(\Gamma) \geq 0\}.
\end{align*}

(a) Obviously, we can see from the definition that $\mathcal{F}_2 \subseteq \mathcal{F}_3$, and thus
\begin{align*}
	f_2^*\geq f_3^*.
\end{align*}
Suppose $(\Gamma^*,\Omega^*)$ is an optimal solution to \eqref{eq:KS_NN} and then $f_3^* = f(\Gamma^*,\Omega^*)$. By our construction of $(\widehat{\Gamma},\widehat{\Omega})$, it follows from the non-identifiability of diagonals that
\begin{align*}
	\widehat{\Omega} \oplus \widehat{\Gamma} = \Omega^*\oplus \Gamma^*
\end{align*}
and therefore $f(\widehat{\Gamma},\widehat{\Omega}) = f(\Gamma^*,\Omega^*)$. Moreover,
\begin{align*}
	\lambda_{\min}(\Omega^*\oplus \Gamma^*) = \lambda_{\min}(\Omega^*) + \lambda_{\min}(\Gamma^*) >0,
\end{align*}
and by the choice of $c$, we have $\widehat{\Gamma} \in \mathbb{S}^{t}_{++},\widehat{\Omega} \in \mathbb{S}^{s}_{++}$. Namely, $(\widehat{\Gamma},\widehat{\Omega})\in \mathcal{F}_2$ is a feasible point to \eqref{eq:KS}. Then
\begin{align*}
	f_2^*\leq f(\widehat{\Gamma},\widehat{\Omega}) = f(\Gamma^*,\Omega^*)=f_3^*.
\end{align*}
We have proved that $f_2^*=f_3^*$.

(b) The statement holds naturally since $\mathcal{F}_2 \subseteq \mathcal{F}_3$ and $f_2^*=f_3^*$.

(c) As shown in (a), $(\widehat{\Gamma},\widehat{\Omega})$ is optimal to \eqref{eq:KS} as it is feasible and attains the optimal objective function value, that is,
\begin{align*}
	(\widehat{\Gamma},\widehat{\Omega})\in \mathcal{F}_2,\quad f(\widehat{\Gamma},\widehat{\Omega}) = f(\Gamma^*,\Omega^*)=f_3^*=f_2^*.
\end{align*}

The proof is completed.
\end{proof}

\section{PROOF OF THEOREM \ref{thm: sol}}
\label{sec:prof_of_sol}
\begin{proof}
First of all, we can prove that the function $(\Gamma, \Omega) \to -\log |\Omega \oplus \Gamma|$ is lower semi-continuous. In fact, it follows from the fact that $\alpha \geq -\log |\Omega \oplus \Gamma|$ whenever $\alpha = \lim \alpha_k$, $\Omega = \lim \Omega_k$, $\Gamma = \lim \Gamma_k$ for sequences $\{\alpha_k\}$, $\{\Omega_k\}$, $\{\Gamma_k\}$ such that $\alpha_k \geq -\log |\Omega_k \oplus \Gamma_k|$ for every $k$. Denote the objective function in \eqref{eq:KS_NN} as
\begin{align*}
f(\Gamma,\Omega)   =   \left\{\begin{aligned}
&  -  \log|\Omega  \oplus  \Gamma|   +   \langle \Omega, W\rangle   +   \langle \Gamma, R\rangle    +  \lambda_0 s \|\Gamma\|_{1,{\rm off}}   +  \lambda_0 t \|\Omega\|_{1,{\rm off}}  && \mbox{if }  {\rm diag}(\Omega) \geq  0 ,  {\rm diag}(\Gamma) \geq  0\\
&+\infty && \mbox{otherwise}
\end{aligned}\right. .
\end{align*}
Then it can be seen that $f(\cdot,\cdot)$ is lower semi-continuous, convex, proper in $\mathbb{S}^{t}\times\mathbb{S}^{s}$.

Next we compute the recession function of $f(\cdot,\cdot)$ based on \citet[Theorem~8.5]{rockafellar1996convex}. We have that
\begin{align*}
&(f0+)(\Gamma,\Omega) = \lim_{\alpha \to+\infty} \frac{f(I_{t} + \alpha\Gamma,I_{s} + \alpha \Omega) - f(I_{t},I_{s})}{\alpha}\\
&=
\begin{cases}
\langle \Omega, W\rangle + \langle \Gamma, R\rangle  +\lambda_0 s \|\Gamma\|_{1,{\rm off}} +\lambda_0 t \|\Omega\|_{1,{\rm off}} & \mbox{ if }  {\rm diag}(\Gamma)  \geq  0,\ {\rm diag}(\Omega)  \geq  0,\ \Omega \oplus \Gamma \in\mathbb{S}^{st}_{+}\\
+\infty & \mbox{ otherwise}
\end{cases},
\end{align*}
where the last equality holds since when ${\rm diag}(\Gamma) \geq 0,\ {\rm diag}(\Omega) \geq 0,\ \Omega \oplus \Gamma \in\mathbb{S}^{st}_{+}$, we have
\begin{align*}
&\lim_{\alpha \to+\infty} \frac{f(I_{t} + \alpha\Gamma,I_{s} + \alpha \Omega) - f(I_{t},I_{s})}{\alpha} \\
&=
\lim_{\alpha\to+\infty}\frac{-\log |I_{s} \oplus I_{t} + \alpha (\Omega \oplus \Gamma)| + \log|I_{s} \oplus I_{t}|}{\alpha} + \langle \Omega, W\rangle + \langle \Gamma, R\rangle  +\lambda_0 s \|\Gamma\|_{1,{\rm off}} +\lambda_0 t \|\Omega\|_{1,{\rm off}} \\
&=\langle \Omega, W\rangle + \langle \Gamma, R\rangle  +\lambda_0 s \|\Gamma\|_{1,{\rm off}} +\lambda_0 t \|\Omega\|_{1,{\rm off}}.
\end{align*}
Therefore, the recession cone of $f(\cdot,\cdot)$ is
\begin{align*}
&\{(\Gamma,\Omega) \mid {\rm diag}(\Gamma)  \geq  0, {\rm diag}(\Omega)  \geq  0, \Omega  \oplus  \Gamma \in \mathbb{S}^{st}_{+},  \langle \Omega, W\rangle  +  \langle \Gamma, R\rangle   + \lambda_0 s \|\Gamma\|_{1,{\rm off}} + \lambda_0 t \|\Omega\|_{1,{\rm off}} \leq 0\}\\
&=\{(\Gamma,\Omega) \mid {\rm diag}(\Gamma)  \geq  0,\ {\rm diag}(\Omega)  \geq  0,\ \Omega  \oplus  \Gamma \in \mathbb{S}^{st}_{+}, \ \langle \Omega \oplus \Gamma, C\rangle =0,   \|\Gamma\|_{1,{\rm off}}=0,  \|\Omega\|_{1,{\rm off}}=0 \}\\
&=\{(\Gamma,\Omega) \mid \Omega  =  {\rm  Diag}(\alpha_1,\dots,\alpha_s),\Gamma  =  {\rm Diag}(\gamma_1,\dots,\gamma_t),\alpha_i \geq  0,\gamma_j \geq  0,(\alpha_i + \gamma_j) \sum_{k=1}^n( Z_{ji}^{(k)})^2 = 0\},
\end{align*}
where the first equality follows from that $\langle \Omega, W\rangle + \langle \Gamma, R\rangle = \langle \Omega \oplus \Gamma, C\rangle \geq 0$ as both $\Omega \oplus \Gamma$ and the sample covariance matrix $C$ in \eqref{eq: graphical_lasso} are positive semidefinite; the second equality uses the expression of diagonal entries of $C$; ${\rm Diag}(\alpha_1,\dots,\alpha_s)$  returns a square diagonal matrix with the elements $\alpha_i$ on the main diagonal.

We prove by contradiction to see that $\alpha_i + \gamma_j = 0$, for all $i,j$. Suppose there exist $i_1$ and $j_1$ such that $\alpha_{j_1} + \gamma_{i_1} > 0$, then $\sum_{k=1}^n( Z_{i_1j_1}^{(k)})^2=0$. Under Assumption \ref{assu}, we have $R_{i_1i_1}>0$ and thus $ Z_{i_1\cdot}^{(k)} \neq 0$ for some $k$. Namely, there exits $j_2\neq j_1$ such that $ Z_{i_1j_2}^{(k)}\neq 0$, which implies $\sum_{k=1}^n (Z_{i_1j_2}^{(k)})^2\neq 0$ and then $\alpha_{j_2} + \gamma_{i_1}=0$. Similarly, under Assumption \ref{assu}, we have $W_{j_1j_1}>0$, which implies that $\alpha_{j_1} + \gamma_{i_2}=0$ for some $i_2\neq i_1$. Therefore,
$\alpha_{j_1} + \gamma_{i_1} = \alpha_{j_1} - \alpha_{j_2} > 0$ and  $\alpha_{j_2} + \gamma_{i_2} = \alpha_{j_2} - \alpha_{j_1} < 0$, which is contradictory to $\alpha_i + \gamma_j \geq 0$. Therefore, all $\alpha_i$'s and $\gamma_j$'s are zero and the recession cone contains zero alone.

Lastly, by   \citet[Theorem 27.1]{rockafellar1996convex}, the minimum set of $f$ is a non-empty bounded set. To this end, we have proven that problem \eqref{eq:KS_NN} admits a non-empty and bounded solution set.
\end{proof}

\section{PROXIMAL OPERATOR ASSOCIATED WITH $-\beta \log|\Omega \oplus \cdot|$}
\label{sec:prox_psi}
The following proposition provides an efficient procedure to compute $\Psi_{{\rm Right},\beta,\Omega}(\cdot)$. The proof is omitted as it is similar to the case in Proposition \ref{prop: left}.
 \begin{proposition}\label{prop: right}
 	Given $\beta>0$ and $\Omega \in \mathbb{S}^s$ with eigenvalues $\mu_1,\ldots,\mu_s$. For any $\Gamma\in \mathbb{S}^t$ with the eigenvalue decomposition $\Gamma = P\Sigma_{\Gamma} P^T$, $\Sigma_{\Gamma} = {\rm Diag}(\lambda_1,\ldots,\lambda_t)$, we have
 	\begin{align*}
 	\Psi_{{\rm Right},\beta,\Omega}(\Gamma) = P{\rm Diag}(\alpha_1,\ldots,\alpha_t)P^T,
 	\end{align*}
 	where for every $i=1,\ldots,t$,  $\alpha_i$ is the unique solution to the univariate nonlinear equation	
 	\begin{align*}
 	\alpha_i-\lambda_i -\sum_{j=1}^s \frac{\beta}{\alpha_i+\mu_j} = 0,\quad \alpha_i >- \min_{j=1,\ldots,s} \mu_j.
 	\end{align*}
 \end{proposition}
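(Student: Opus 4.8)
\textbf{Proof proposal for Proposition~\ref{prop: right}.}
The plan is to mirror the argument used for Proposition~\ref{prop: left}, since $\Psi_{{\rm Right},\beta,\Omega}$ is the proximal operator of $-\beta\log|\Omega\oplus\cdot|$, a function on $\mathbb{S}^t$ that is orthogonally invariant in its variable. First I would establish the orthogonal invariance: for any orthogonal matrix $M\in\mathbb{R}^{t\times t}$,
\[
\log|\Omega\oplus(M\Gamma M^T)| = \log|(I_s\otimes M)(\Omega\oplus\Gamma)(I_s\otimes M)^T| = \log|\Omega\oplus\Gamma|,
\]
using $\Omega\oplus(M\Gamma M^T) = \Omega\otimes I_t + I_s\otimes (M\Gamma M^T) = (I_s\otimes M)(\Omega\otimes I_t + I_s\otimes\Gamma)(I_s\otimes M)^T$ and that $I_s\otimes M$ is orthogonal with unit determinant. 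Combined with the orthogonal invariance of the Frobenius norm, this yields, exactly as in Proposition~\ref{prop: left}, that $\Psi_{{\rm Right},\beta,\Omega}(\Gamma) = P\,\Psi_{{\rm Right},\beta,\Omega}(\Sigma_\Gamma)\,P^T$ whenever $\Gamma = P\Sigma_\Gamma P^T$ is an eigenvalue decomposition, and that $\Psi_{{\rm Right},\beta,\Omega}(\Sigma_\Gamma)$ is itself diagonal.

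Next I would reduce the diagonal case to a separable scalar minimization. Writing $\Psi_{{\rm Right},\beta,\Omega}(\Sigma_\Gamma) = {\rm Diag}(\alpha_1,\dots,\alpha_t)$ and using that the eigenvalues of $\Omega\oplus\Delta$ are the pairwise sums $\mu_j + (\text{eigenvalues of }\Delta)$ (see \citet{horn1991topics}), the objective $\frac12\|\Delta-\Sigma_\Gamma\|_F^2 - \beta\log|\Omega\oplus\Delta|$ restricted to diagonal $\Delta = {\rm Diag}(\alpha_1,\dots,\alpha_t)$ becomes
\[
\sum_{i=1}^t \frac{(\alpha_i-\lambda_i)^2}{2} - \beta\sum_{i=1}^t\sum_{j=1}^s \log(\alpha_i+\mu_j),
\]
which decouples across $i$. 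The first-order optimality condition for each coordinate is $\alpha_i - \lambda_i - \sum_{j=1}^s \beta/(\alpha_i+\mu_j) = 0$, with the domain constraint $\alpha_i > -\min_j \mu_j$ forced by the $\log$ terms. Existence and uniqueness of the root on $(-\min_j\mu_j,+\infty)$ follows exactly as in the last paragraph of the proof of Proposition~\ref{prop: left}: the function $y\mapsto (y-\lambda_i)/\beta - \sum_j 1/(y+\mu_j)$ is strictly increasing on that interval, tends to $-\infty$ at the left endpoint and to $+\infty$ at the right.

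There is essentially no new obstacle here — the only point requiring a moment of care is the role-swap in the Kronecker sum: $\Omega\oplus\Gamma = \Omega\otimes I_t + I_s\otimes\Gamma$ is \emph{not} symmetric in $\Omega$ and $\Gamma$ as written, so the conjugating orthogonal matrix is $I_s\otimes M$ (acting on the right factor) rather than $M\otimes I_s$, and the index ranges in the scalar equation are $i\in[t]$ summing over $j\in[s]$, the transpose of what appears in Proposition~\ref{prop: left}. Once this bookkeeping is tracked correctly, the argument is a verbatim transcription of the proof already given, which is why the paper elects to omit it.
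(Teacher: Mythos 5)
Your proposal is correct and follows exactly the route the paper intends: the paper omits this proof precisely because it is the proof of Proposition~\ref{prop: left} with the roles of the two Kronecker-sum factors exchanged, which is what you have written out. The one detail you add beyond the paper --- the explicit conjugation identity $\Omega\oplus(M\Gamma M^T)=(I_s\otimes M)(\Omega\oplus\Gamma)(I_s\otimes M)^T$ justifying orthogonal invariance in the right factor --- is correct and is indeed the only point where the asymmetry of the Kronecker sum requires a moment of care.
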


\section{RELATIVE KKT ERROR}
\label{sec:relativekkt}
Here is a remark on the stopping criterion of {\tt DNNLasso}. Note that the Karush-Kuhn-Tucker (KKT) optimality conditions of \eqref{eq:KSv3} are given as follows:
\begin{align*}
	\left\{ \begin{aligned}
   & -\nabla \Gamma + R - X=0 \quad \mbox{ where } \nabla \Gamma = \frac{\mathrm{d} \log |\Omega \oplus \Gamma|}{\mathrm{d} \Gamma}\\
   & -\nabla \Omega + U=0 \quad \mbox{ where } \nabla \Omega = \frac{\mathrm{d} \log |\Omega \oplus \Gamma|}{\mathrm{d} \Omega}\\
   & \Lambda - {\rm Prox}_p(\Lambda-X) = 0\\
   & \Theta - {\rm Prox}_q(\Theta -Y) = 0 \\
   & W - Y - U = 0\\
   & \Gamma - \Lambda = 0 \\
   & \Xi - \Theta  = 0 \\
   & \Xi - \Omega = 0
   \end{aligned}\right. .
\end{align*}
It can be proved that if $\Gamma$ has the eigenvalues $\lambda_1,\ldots,\lambda_{t}$ and the corresponding eigenvectors $u_1,\ldots,u_{t}\in \mathbb{R}^{t}$, and $\Omega$ has the eigenvalues $\mu_1,\ldots,\mu_{s}$ and the corresponding eigenvectors $v_1,\ldots,v_{s}\in \mathbb{R}^{s}$, we will have
\begin{align*}
	&\frac{\mathrm{d} \log|\Omega \oplus \Gamma| }{\mathrm{d} \Gamma}   = \sum_{i=1}^{t} \left(\sum_{j=1}^{s}\frac{1}{\mu_j+\lambda_i}\right)u_i u_i^T,\\
&\frac{\mathrm{d} \log|\Omega \oplus \Gamma| }{\mathrm{d} \Omega} = \sum_{j=1}^{s} \left(\sum_{i=1}^{t}\frac{1}{\mu_j+\lambda_i}\right)v_j v_j^T.
\end{align*}
We terminate {\tt DNNLasso} when the relative KKT error is less than a given tolerance,  for example, $10^{-6}$.
Here  the relative KKT error refers to the degree to which the KKT optimality conditions are violated. It is a commonly used metric for assessing the accuracy of approximate solutions obtained from primal-dual methods. The relative KKT error refers to the maximum value of the following  quantities:
\begin{align}
&\frac{\|-\nabla \Gamma + R-X\|_F}{1+\|\nabla \Gamma\|_F + \|R\|_F + \|X\|_F}, \mbox{ where } \nabla \Gamma = \frac{\mathrm{d} \log |\Omega \oplus \Gamma|}{\mathrm{d} \Gamma},\label{eq1}\\
&\frac{\|-\nabla \Omega + U\|_F}{1+\|\nabla \Omega\|_F + \|U\|_F}, \mbox{ where } \nabla \Omega = \frac{\mathrm{d} \log |\Omega \oplus \Gamma|}{\mathrm{d} \Omega},\label{eq2}\\
&\frac{\|\Lambda - {\rm Prox}_p(\Lambda-X)\|_F}{1 + \|\Lambda\|_F + \|{\rm Prox}_p(\Lambda-X)\|_F},\label{eq3}\\
&\frac{\|\Theta - {\rm Prox}_q(\Theta-Y)\|_F}{1 + \|\Theta\|_F + \|{\rm Prox}_q(\Theta-Y)\|_F},\label{eq4}\\
&\frac{\| W- Y -U\|}{1+\|W\| + \|Y\| + \|U\|},\label{eq7}\\
&\frac{\|\Gamma - \Lambda\|_F}{1 + \|\Gamma\|_F + \|\Lambda\|_F},\label{eq5}\\
&\frac{\|\Xi - \Theta\|_F}{1 + \|\Xi\|_F + \|\Theta\|_F},\label{eq6}\\
&\frac{\|\Xi - \Omega\|_F}{1 + \|\Xi\|_F + \|\Omega\|_F}.\label{eq8}
\end{align}

These quantities include primal feasibility residuals (equations \eqref{eq5}, \eqref{eq6}, and \eqref{eq8}), dual feasibility residuals (equations \eqref{eq1}, \eqref{eq2}, and \eqref{eq7}), as well as complementarity slackness between primal and dual variables (equations \eqref{eq3} and \eqref{eq4}). Therefore, in our proposed algorithm, we use the relative KKT error to evaluate the optimality of the obtained approximate solutions.

For better illustration, we attach one example of the primal and dual residual plot on a Type 2 synthetic graph with $s =t = 500$ in Figure~\ref{fig-res-iter}. Moreover, we also plot the corresponding complementarity slackness between primal and dual variables.

\begin{figure}[H]
	\centering
	\includegraphics[width=0.34\textwidth]{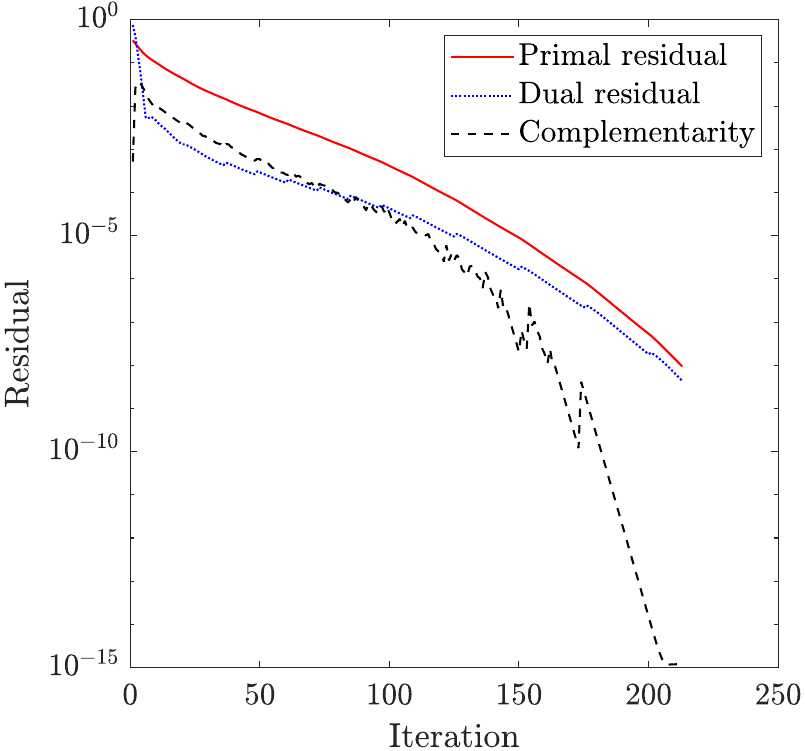}
	\caption{Relative KKT error decreasing on a {\bf Type 2} synthetic graph with $s=t=500$, $\lambda_0=10^{-2}$.}\label{fig-res-iter}
\end{figure}

\section{A SIMPLER VARIANT OF  DNNLASSO}\label{sec:admm-3block}
This section introduces a simpler variant of {\tt DNNLasso} (Algorithm~\ref{alg: admm-3d}) by eliminating the auxiliary variables $\Xi$. As we can see from \eqref{eq:KSv3}, $\Xi$ is a ``duplicate'' of the column-wise precision matrix $\Omega$, and at the optimal point they should be identical, i.e., $\Xi = \Omega$. Without $\Xi$,  this variant is simpler and has less variables compared with {\tt DNNLasso}.

\begin{algorithm}
	\caption{: A variant of {\tt DNNLasso}}
	\label{alg: admm-3d}
	\begin{algorithmic}
		\STATE {\bfseries Input:} Given sample covariance matrices $R\in\mathbb{S}^t_+$, $W\in \mathbb{S}^s_+$ and a parameter $\lambda_0>0$.
		\STATE {\bfseries Initialization:} Set $\lambda_T = \lambda_0 s$, $\lambda_S = \lambda_0 t$ and $\tau = 1.618$. Choose $\sigma > 0$. Choose an initial point $(\Omega^0,\Lambda^0,\Theta^0,X^0,Y^0)\in  \mathbb{S}^{s}\times\mathbb{S}^{t}\times \mathbb{S}^{s}\times \mathbb{S}^{t}\times \mathbb{S}^{s}$.  Set $k\leftarrow 0$.
		\REPEAT
		\STATE {\bfseries Step 1}. Compute
		\begin{align*}
			\Gamma^{k+1}=\Psi_{{\rm Right},1/\sigma,\Omega^k}(\Lambda^k +\frac{X^k}{\sigma} -\frac{R}{\sigma}),\ \
			\Omega^{k+1} =\Psi_{{\rm Left},1/\sigma,\Gamma^{k+1}}(\Theta^k +\frac{Y^k}{\sigma} -\frac{W}{\sigma}).
		\end{align*}
		\\
		\STATE {\bfseries Step 2}. Let $\widetilde{\Lambda}= \Gamma^{k+1} -X^k/\sigma$, $\widetilde{\Theta}= \Omega^{k+1} - Y^k/\sigma$. Compute $\Lambda^{k+1}\in \mathbb{S}^t$  and $\Theta^{k+1}\in \mathbb{S}^s$ as
	\begin{align*}
		\Lambda^{k+1}_{ij}   =   \left\{
		\begin{array}{ll}
		    \max(0, \widetilde{\Lambda}_{ii})   & \mbox{if }i = j\\
		    {\rm sgn}( \widetilde{\Lambda}_{ij} ) \max (| \widetilde{\Lambda}_{ij}|  -  \frac{\lambda_T}{\sigma},0) & \mbox{if }i \neq j
		\end{array}
		\right.    ,\ \
		\Theta^{k+1}_{ij}   =   \left\{
		\begin{array}{ll}
		   \max(0, \widetilde{\Theta}_{ii})  & \mbox{if }i = j\\
		    {\rm sgn}(  \widetilde{\Theta}_{ij} ) \max(| \widetilde{\Theta}_{ij}|  - \frac{\lambda_S}{\sigma},0) & \mbox{if }i \neq j
		\end{array}
		\right.    .
	\end{align*}
		\\[5pt]
		\STATE {\bfseries Step 3}. Update the multipliers by
		\begin{align*}
			X^{k+1}
			= X^k-\tau\sigma (\Gamma^{k+1} - \Lambda^{k+1}),\ \ Y^{k+1} = Y^k-\tau\sigma (\Omega^{k+1} - \Theta^{k+1}).
		\end{align*}
		\\
		\STATE {\bfseries Step 4}. Set $k\leftarrow k+1$.
		\UNTIL{Stopping criterion is satisfied.}
		
		\STATE {\bfseries Output:} An approximate solution $(\widehat{\Gamma}, \widehat{\Omega})$ 
computed as follows: $(\widehat{\Gamma}, \widehat{\Omega})=(\Gamma^{k},\Omega^{k})$ if $\Gamma^{k} \succ 0$, $\Omega^{k} \succ 0$; and $(\widehat{\Gamma}, \widehat{\Omega})=(\Gamma^{k}-  c I_t,\Omega^{k} + c I_s)$ with $c = (\lambda_{\rm min}(\Gamma^{k})-\lambda_{\rm min}(\Omega^{k}))/2$ otherwise.
	\end{algorithmic}
\end{algorithm}

\section{MORE NUMERICAL RESULTS ON LARGER SYNTHETIC DATA}
\label{sec:largedata}
To better demonstrate the superior performance of ${\tt DNNLasso}$, we show the comparison of {\tt DNNLasso}, ${\tt TeraLasso}$ and ${\tt EiGLasso}$ for learning the KS-structured precision matrices on larger synthetic data sets in the following Table \ref{table}. Specifically, we run our experiments on two types of graphs with dimensions $s=t=1500, 2000, 3000, 4000, 5000$. We set the maximum computational time of each method as 2 hours.

\begin{table}[H]
\centering
	\caption{Comparison of three methods on large synthetic data with $\lambda_0 = 0.01$.}\label{table}
	\renewcommand\arraystretch{1.2}
	\begin{tabular}{ |c|c|c|c|c|c|c|c|  }
		\hline
		&   &  \multicolumn{2}{|c|}{{\tt DNNLasso}}  & \multicolumn{2}{|c|}{{\tt TeraLasso}} & \multicolumn{2}{|c|}{{\tt EiGLasso}} \\
		\hline
		Graph &  $(s,t)$  &  Time (s) & Obj    & Time (s) & Obj &   Time (s) & Obj \\
		\hline
		\multirow{ 5}{*}{Type 1}
		& $(1500,1500)$  &  186 & -2.2596e6 & 7289 & -2.2322e6 & 7337 & -2.2576e6 \\
		\cline{2-8}
		&$(2000,2000)$  &  405 & -3.8564e6 & 7256 & -3.5893e6 & 7488 & -3.8375e6 \\
		\cline{2-8}
		&$(3000,3000)$  &  583 & -8.0733e6 & 7244 & -6.1875e6 & 7362 & -7.5311e6 \\
		\cline{2-8}
		&$(4000,4000)$  &  1549 & -1.3884e7 & -- & -- & -- & --\\
		\cline{2-8}
		&$(5000,5000)$  &  3386 & -2.1174e7 & -- & -- & -- & --\\
		\hline
		\multirow{ 5}{*}{Type 2}
		& $(1500,1500)$  &  240 &  -2.5098e6 & 7208 & -2.4604e6 & 7474 & -2.5064e6 \\
		\cline{2-8}
		&$(2000,2000)$  & 526  & -4.2782e6 & 7282 & -3.9906e6 & 7638 & -4.2661e6 \\
		\cline{2-8}
		&$(3000,3000)$  & 610  & -8.9069e6 & 7251 & -7.2176e6 & 7670 & -8.1600e6 \\
		\cline{2-8}
		&$(4000,4000)$  &  1558 & -1.5239e7 & -- & -- & -- & --\\
		\cline{2-8}
		&$(5000,5000)$  & 3338  & -2.3122e7 & -- & -- & -- & --\\
		\hline
	\end{tabular}
\end{table}

We can see from Table \ref{table} that, our proposed {\tt DNNLasso} performs better than the other two estimators by a large margin, in the sense that we take much less time but get much better objective function values. Moreover, we can see that even for the smallest data set with $s = t= 1500$, {\tt TeraLasso} and {\tt EiGLasso} can not achieve satisfactory perfermance within 2 hours, while our proposed {\tt DNNLasso} is able to solve the largest problem with $s=t=5000$ within one hour.

\section{MORE EXPERIMENTAL RESULTS ON COIL100 VIDEO DATA}
\label{sec:coil100}
We report more experimental results on COIL100 Video Data here for illustration. We pick another object (a cargo) from the data, which is illustrated in Figure \ref{fig:car}. From Figure \ref{fig:car}, we again find that the reduced resolution of $32\times 32$ is a good representation of the original $128\times 128$ resolution, while the reduced resolution of $8\times 8$ may not provide enough information. That is one evidence why we need an efficient and robust algorithm for estimating the large-scale KS-structured precision matrix otherwise it is impossible for us to deal with the case for $t=32\times 32=1024$ pixels within seconds.

\begin{figure}[H]
	\centering
	\includegraphics[width=0.4\textwidth]{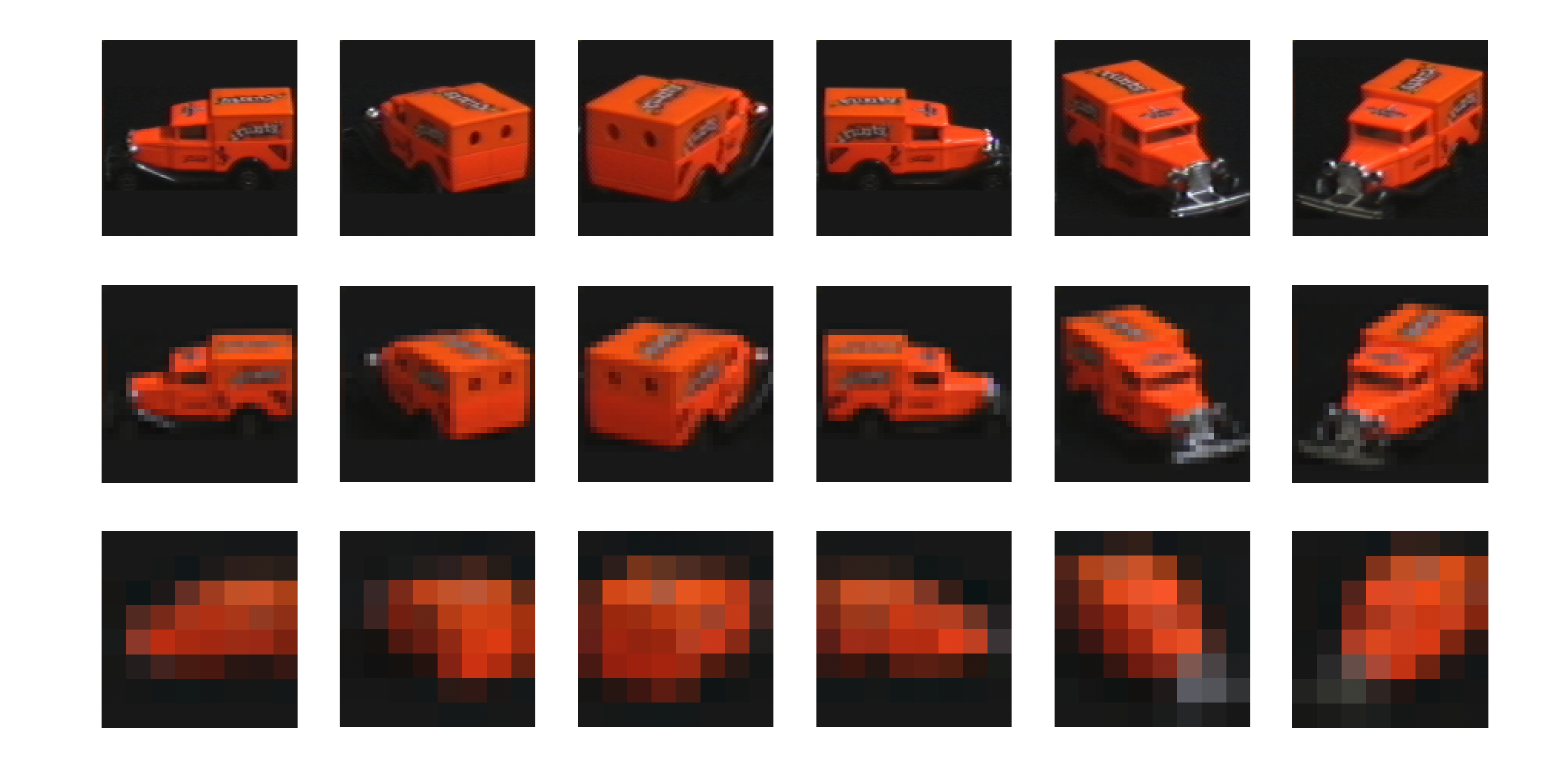}
	\caption{A rotating box of a cargo in COIL100 video data. First row: original resolution of $128\times 128$ pixels. Second (resp. third) row: reduced resolution of $32\times 32$ (resp. $8\times 8$) pixels.}
	\label{fig:car}
\end{figure}

\begin{figure}[H]
	\centering
	\subfigure[][]{\includegraphics[width=0.325\textwidth]{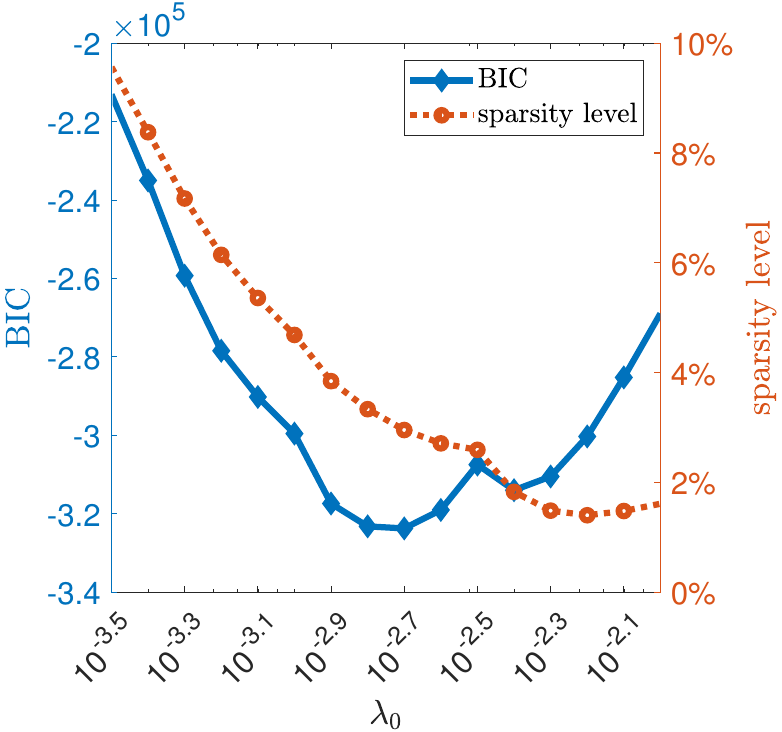}}\,\,
	\subfigure[][]{\includegraphics[width=0.3\textwidth]{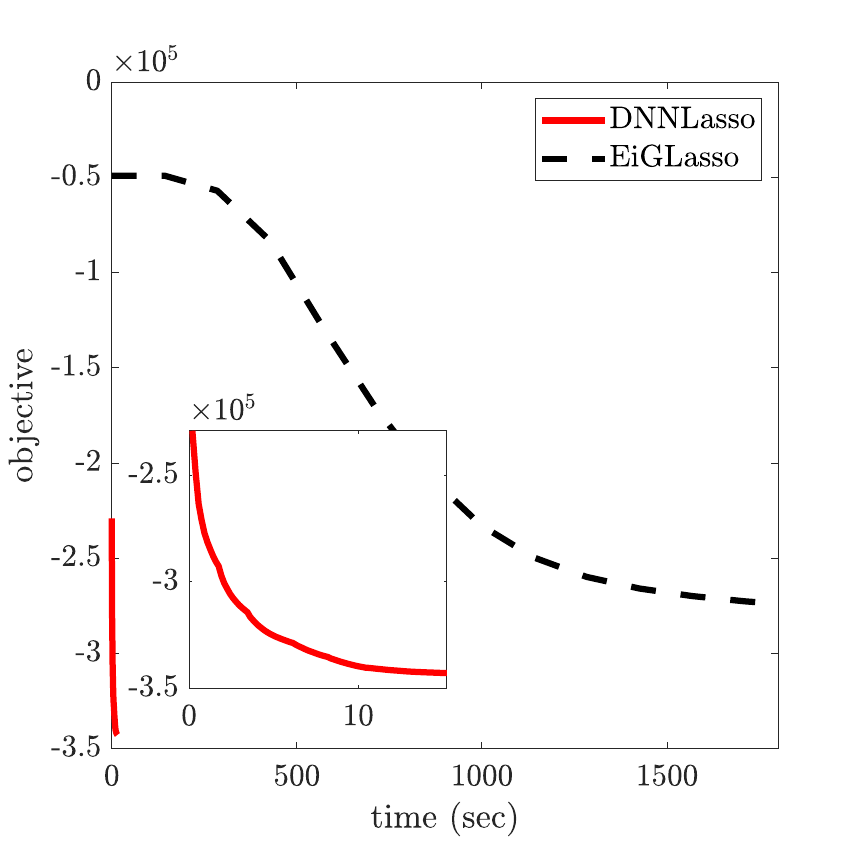}}
	\\
	\subfigure[][]{\includegraphics[width=0.26\textwidth]{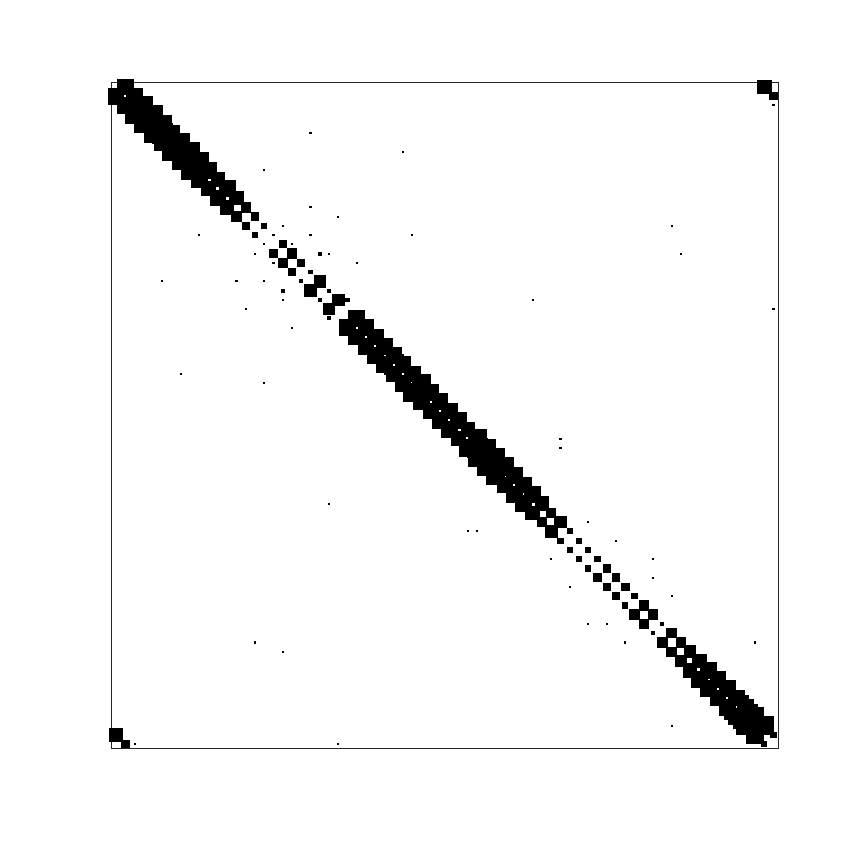}}\quad\quad
	\subfigure[][]{\includegraphics[width=0.3\textwidth]{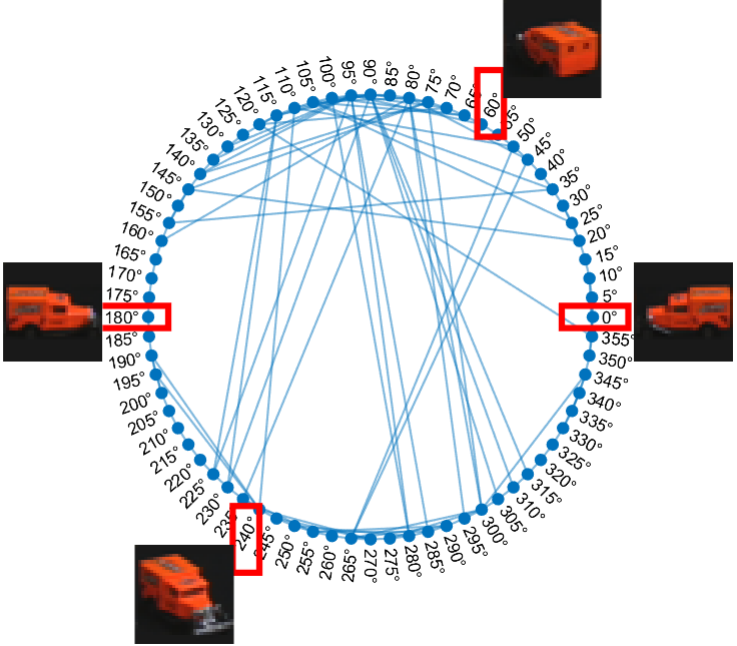}}
	\caption{On $s=72$ frames  with  $t=32\times 32$ pixels. (a) The BIC and sparsity level against $\lambda_0$. (b) The relative objective function value against computational time. 
		(c) Sparsity pattern of the  matrix $\widetilde{\Omega}\in\mathbb{S}^s$ estimated by {\tt DNNLasso} (i.e., the correlation pattern among frames from different angles).  (d) Relationship graph of frames from different angles.}
	\label{fig:coil100}
\end{figure}

We first conduct experiments on $s=72$ frames with the reduced resolution of $t=32\times 32$ pixels. We select parameter $\lambda_0$ from the set $\{10^{-3.5},10^{-3.4},\dots,10^{-2.1},10^{-2}\}$ under the Bayesian information criterion (BIC), and then compare our {\tt DNNLasso} with {\tt TeraLasso} and {\tt EiGLasso}. We terminate {\tt DNNLasso} with tolerance $5\times 10^{-3}$. Figure \ref{fig:coil100}(a) plots the BIC value and sparsity level against $\lambda_0$ and Figure \ref{fig:coil100}(b) illustrates the objective function value against computational time with $\lambda_0$ with the best BIC. {\tt TeraLasso} is not included in the figure as it failed to return a positive definite solution $\Omega\oplus \Gamma$. From Figure \ref{fig:coil100}(b) we can see that the objective function value obtained by {\tt DNNLasso} after 10 seconds is far much better than that obtained by {\tt EiGLasso} after more than 1600 seconds. In Figure \ref{fig:coil100}(c,d), we demonstrate the sparsity pattern of the  matrix $\widetilde{\Omega}\in\mathbb{S}^s$ estimated by {\tt DNNLasso}, namely the relationship graph of frames from different angles. The relationship graph in Figure \ref{fig:coil100}(c) indicates a manifold-like structure where image observed from $x^{\rm o}$ and $(x+360)^{\rm o}$ join, which is expected from a $360^{\rm o}$ rotation. The interesting different structure between Figure \ref{fig:coil100}(c) and Figure \ref{fig:coil1}(c) come from the natural observations from the objects: the box of cold medicine admits 180 degree symmetry while the cargo doesn't.

In addition, we also report the experimental results on the reduced resolution data with $t=8\times 8$ pixels in Figure \ref{fig:coil100_small}. We again find that there are some unexpected correlations among frames shown in Figure \ref{fig:coil100_small}(d), compared with Figure \ref{fig:coil100}(d), which implies that images of $8\times 8$ pixels are too blur to identify.

\begin{figure}[H]
	\centering
	\subfigure[][]{\includegraphics[width=0.325\textwidth]{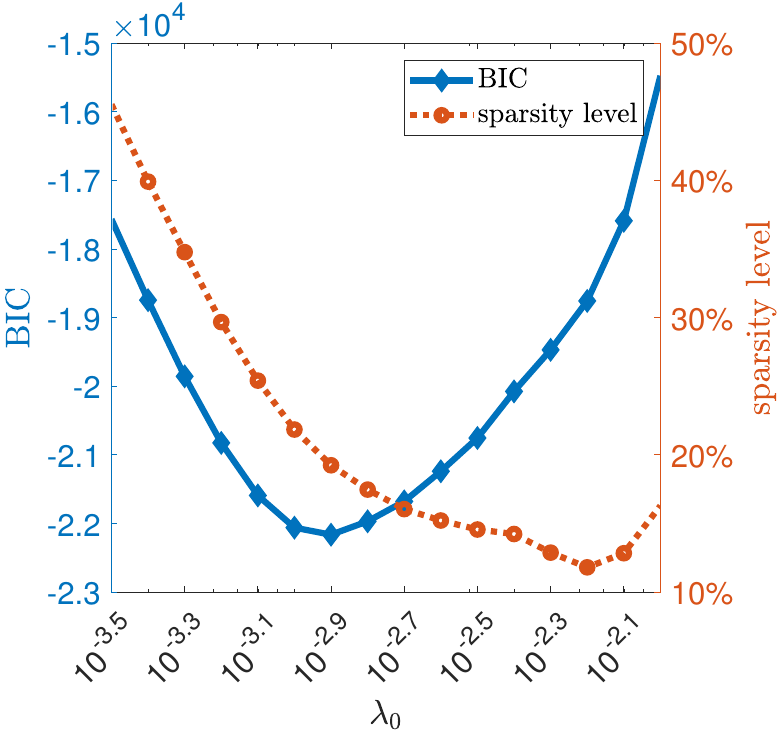}}\,\,
	\subfigure[][]{\includegraphics[width=0.3\textwidth]{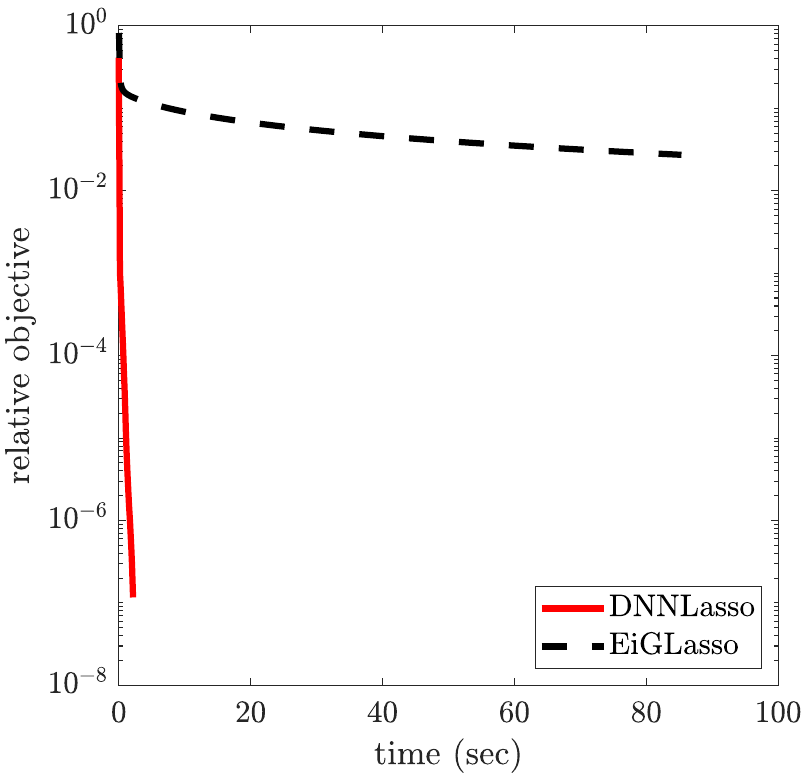}}
	\\ \subfigure[][]{\includegraphics[width=0.26\textwidth]{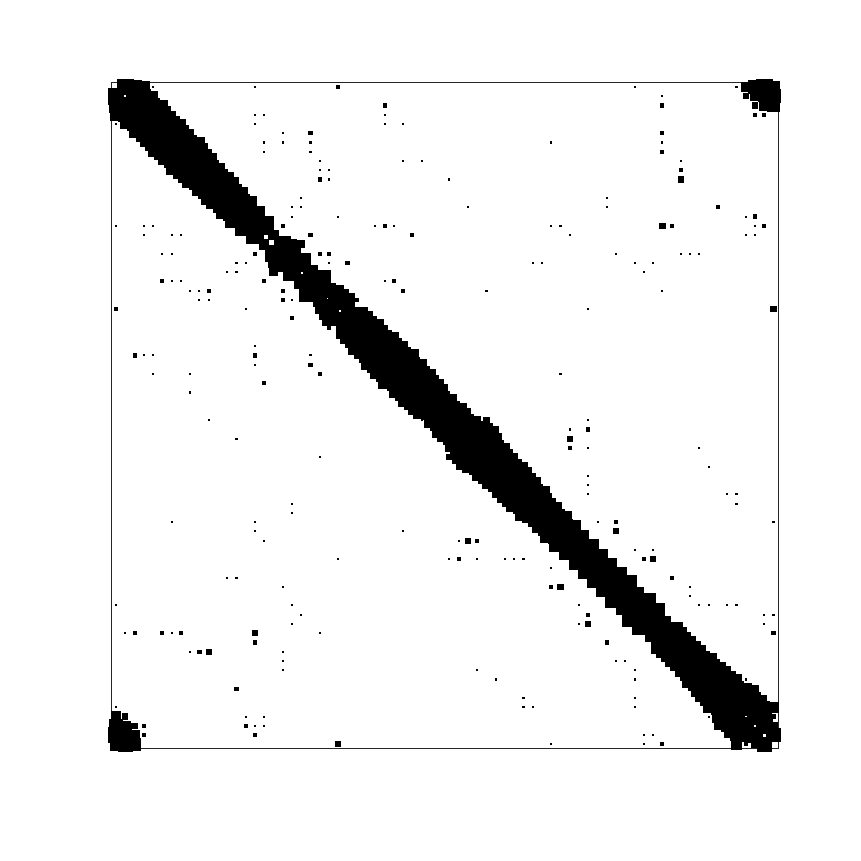}}\quad\quad
	\subfigure[][]{\includegraphics[width=0.3\textwidth]{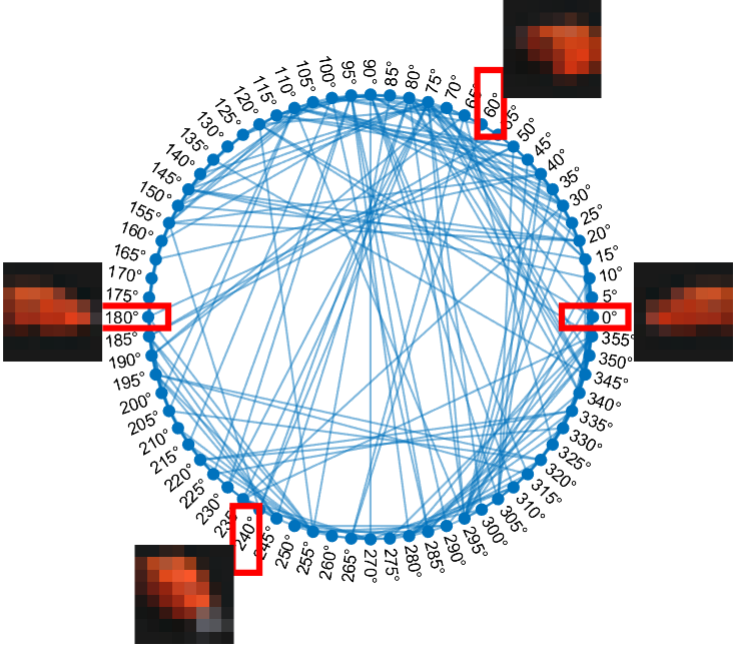}}
	\caption{On $s=72$ frames  with $t=8\times 8$ pixels. 
	}
	\label{fig:coil100_small}
\end{figure}

\end{document}